\documentclass{article}
\pdfoutput=1

\usepackage{microtype}
\usepackage{graphicx}
\usepackage{subcaption}
\usepackage{booktabs} 

\usepackage{hyperref}

\usepackage{hyperref}
\usepackage{url}

\usepackage[utf8]{inputenc} 
\usepackage[T1]{fontenc}    
\usepackage{hyperref}       
\usepackage{url}            
\usepackage{booktabs}       
\usepackage{amsfonts}       
\usepackage[table]{xcolor}  
\usepackage{caption}        
\usepackage{nicefrac}       
\usepackage{microtype}      
\usepackage{xcolor}         

\usepackage{subcaption} 

\usepackage{hyperref}
\usepackage{multirow} 
\usepackage{algorithmic}

\usepackage{listings}
\usepackage{amsmath}
\usepackage{amssymb}
\usepackage{mathtools}
\usepackage{amsthm}

\usepackage{graphicx}

\usepackage{enumitem}

\usepackage{makecell}

\usepackage{wrapfig} 

\theoremstyle{definition}

\usepackage[preprint]{arxiv}

\usepackage{amsmath}
\usepackage{amssymb}
\usepackage{mathtools}
\usepackage{amsthm}

\usepackage[capitalize,noabbrev]{cleveref}

\theoremstyle{plain}
\newtheorem{theorem}{Theorem}[section]
\newtheorem{proposition}[theorem]{Proposition}

\theoremstyle{definition}
\newtheorem{definition}[theorem]{Definition}
\newtheorem{assumption}[theorem]{Assumption}
\theoremstyle{remark}

\usepackage[utf8]{inputenc} 
\usepackage[T1]{fontenc}    
\usepackage{hyperref}       
\usepackage{url}            
\usepackage{booktabs}       
\usepackage{amsfonts}       
\usepackage{nicefrac}       
\usepackage{microtype}      
\usepackage{xcolor}         

\title{Generated Images Are Easier to Forget: A Machine Unlearning Perspective for Synthetic Image Detection}

\author{%
  \textbf{Jun Nie}$^{1,2}$ \quad \textbf{Yonggang Zhang}$^{3}$ \quad \textbf{Tongliang Liu}$^{4}$ \\
  \textbf{Yiu-ming Cheung}$^{2}$ \quad \textbf{Bo Han}$^{2}$ \quad \textbf{Xinmei Tian}$^{1}$ \\[0.5em]
  $^{1}$University of Science and Technology of China \quad
  $^{2}$Hong Kong Baptist University \\
  $^{3}$The Hong Kong University of Science and Technology \quad
  $^{4}$The University of Sydney \\
}

\begin{document}

\maketitle

\begin{abstract}
  Robust detection of generated images is critical to counter the misuse of generative models. Existing methods primarily depend on learning from human-annotated training datasets, limiting their generalization to unseen distributions. 
In contrast, large-scale vision models (LVMs) pre-trained on web-scale datasets exhibit exceptional generalization power through exposure to diverse distributions, offering a transformative paradigm for this task.
However, our experimental results reveal that LVMs pre-trained on natural-image-dominated data
can effectively capture the features of both natural and generated images, yielding
comparably low losses and thus limited discriminative capacity between them.
This prompts a key question: \emph{When and how do LVMs exhibit different behaviors when capturing features of natural and generated images?} 
This investigation reveals an insight: during unlearning, LVMs exhibit disparate forgetting dynamics with feature degradation for generated images escalating faster than natural ones.
Inspired by the disparate dynamics, we introduce two detection methods: 1) data-free detection, which prunes model parameters to induce unlearning without data access, and 2) data-driven detection, which optimizes LVMs to unlearn knowledge tied to generated images.
Extensive experiments conducted on various benchmarks demonstrate that our unlearning-based approach outperforms conventional detection methods. 
By recasting the detection task as a problem of machine unlearning, our work establishes a new paradigm for generated image detection.
\end{abstract}

\section{Introduction}
\label{sec:intro}

With the rapid advancements in generative models~\citep{DBLP:conf/nips/DhariwalN21, DBLP:conf/cvpr/RombachBLEO22, DBLP:conf/cvpr/KarrasLA19}, AI-generated images have reached a level of quality that often makes them almost indistinguishable from natural images to the human eye. These developments have unlocked unprecedented potential in areas such as content creation, media, and entertainment, driving innovation across industries. However, the ability to generate hyper-realistic images also introduces significant risks~\citep{frank2020leveraging}, especially regarding the potential for misuse in misinformation, privacy invasion, and identity fraud. Consequently, effective and robust detection of AI-generated images has become essential to promote the responsible development and deployment of generative models while protecting users and organizations from malicious use.

Existing methods for detecting AI-generated images focus primarily on learning a boundary between natural and generated images~\citep{DBLP:conf/cvpr/WangW0OE20, DBLP:conf/cvpr/OjhaLL23, DBLP:journals/corr/abs-2312-10461, DBLP:journals/corr/abs-2312-16649, chen2025dual, cai2025towards} to construct a binary classifier. In this context, these methods typically collect labeled natural and generated images to train binary classifiers, aiming to capture and separate features that uniquely characterize each category. Learning from these collected training images, models can identify subtle distinctions between natural and generated content, yielding impressive detection performance under specific conditions.

Despite their success, these methods face the challenge of domain shifts in two critical aspects, which typically degrades the generalization performance. 
First, their performance is inherently constrained by the generative models employed to generate training images, potentially leading to generalization failures when encountering images produced by novel generative models.
Second, the dependency on natural images introduces cross-domain adaptation challenges, as real-world test environments often contain samples distributionally deviating from training images. Thus, these methods are usually paired with carefully designed data augmentation techniques such as JPEG to enhance their generalization performance~\citep{DBLP:conf/cvpr/WangW0OE20}.

\begin{wrapfigure}{l}{0.45\textwidth}
  \centering
  \vspace{-0.3cm}
  \includegraphics[width=0.43\textwidth, trim=0 10 0 0, clip]{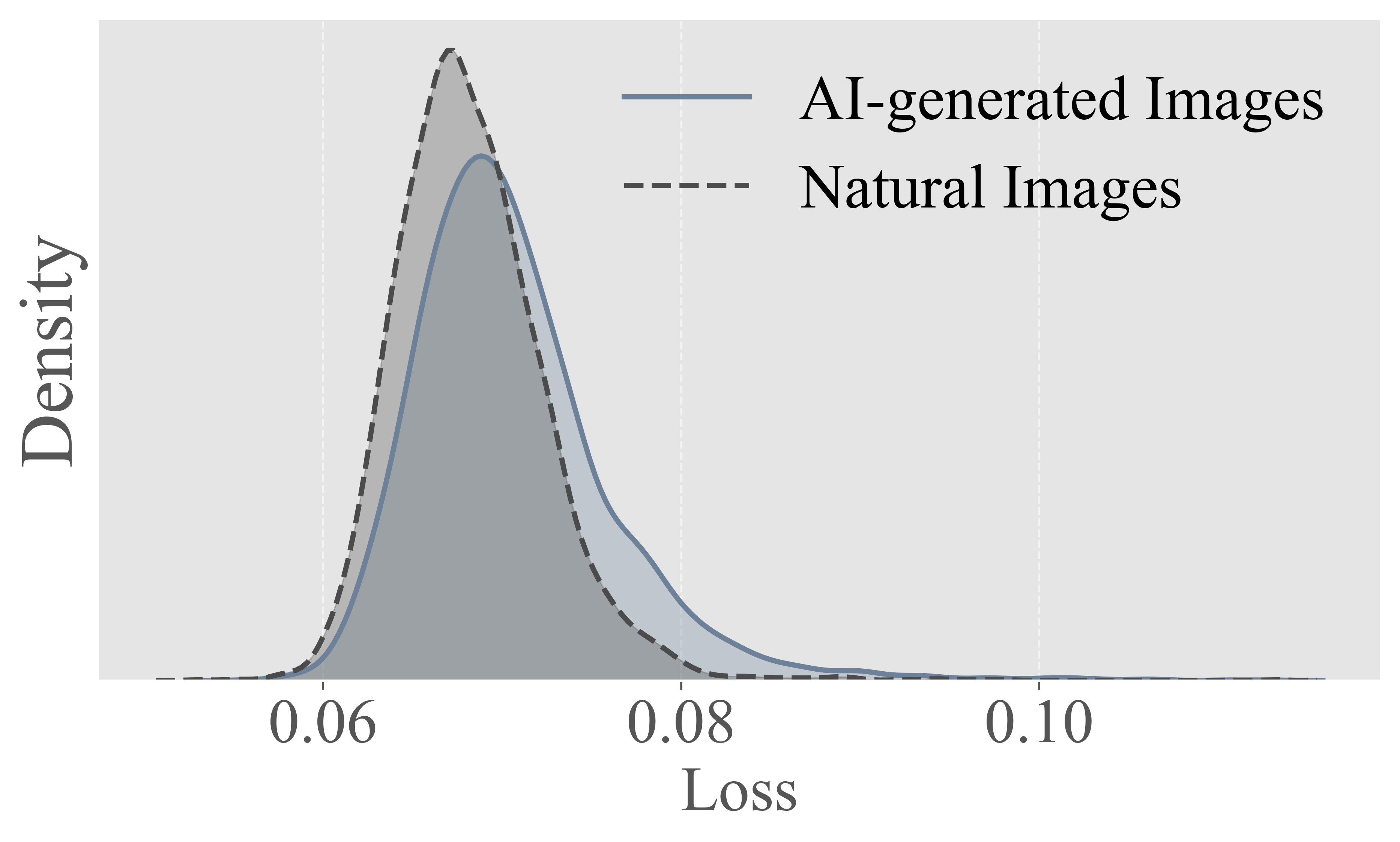}
  \vspace{-0.2cm}
  \caption{LVMs pre-trained on natural images exhibit low losses for both natural and generated images, thus restricting their discriminative
capacity between the types of images.}
  \label{fig:same_loss}
  \vspace{-0.4cm}
\end{wrapfigure}


  

Regarding the ability of generalization, pre-trained vision models (LVMs) have shown great success in various domains thanks to their large-scale training with extensive data~\citep{DBLP:conf/icml/RadfordKHRGASAM21}. Thus, LVMs emerge as promising candidates for promoting the detection of generated images. However, as shown in Figure~\ref{fig:same_loss}, LVMs pre-trained on natural-image-dominated data can effectively capture the features of both natural and generated images to achieve comparably low loss. These experimental results imply that employing LVMs fails to detect generated images correctly, which is consistent with the results shown in Figure~\ref{fig:same_loss}. This prompts a key question: \emph{When and how do LVMs exhibit different behaviors when capturing features of natural and generated images?}

  

\begin{wrapfigure}{l}{0.45\columnwidth}
  \centering
  \vspace{-0.3cm}
  \includegraphics[width=0.46\columnwidth, trim=120 70 370 30, clip]{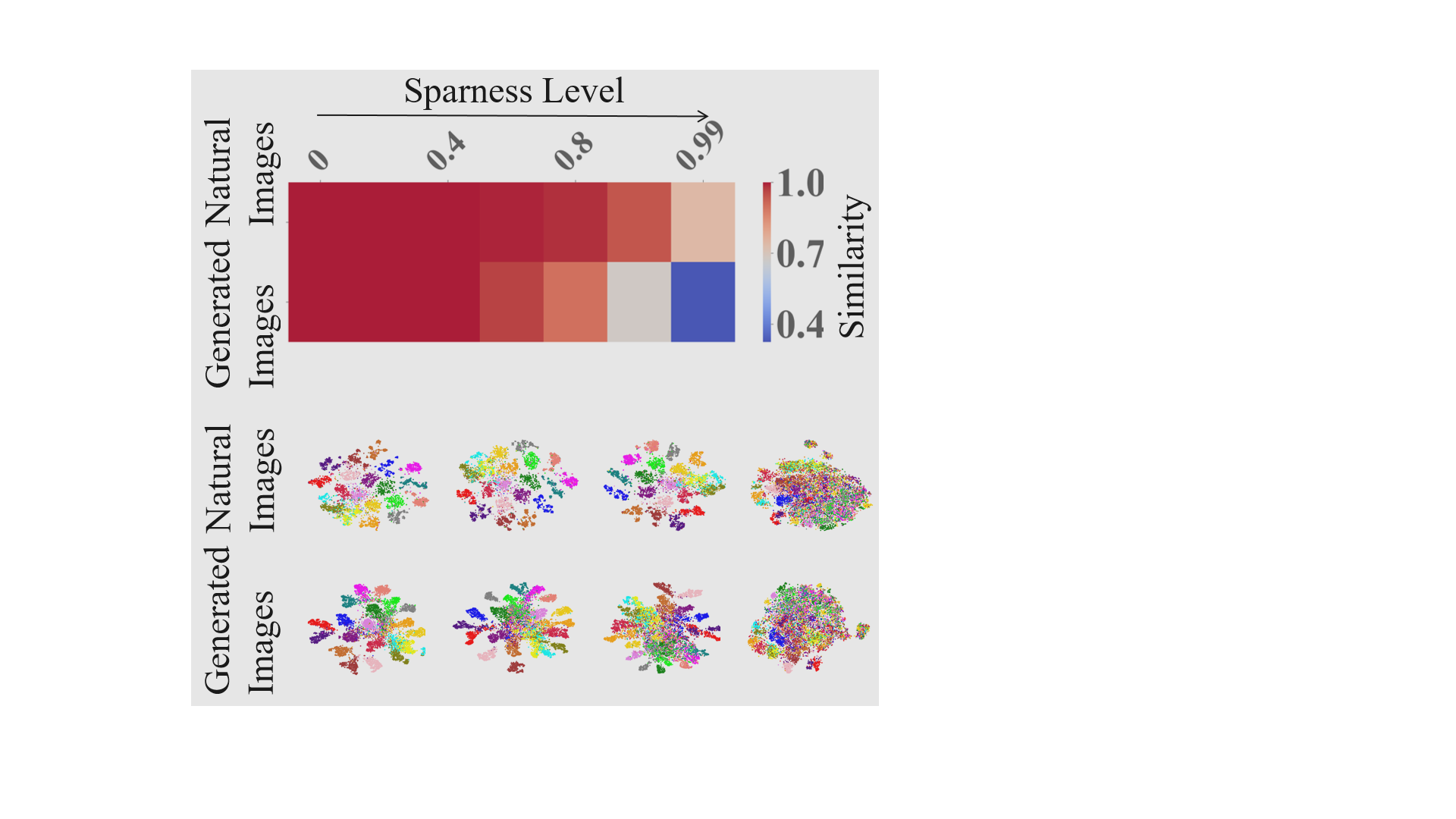}
  \vspace{-0.2cm}
  \caption{Dynamic illustration of unlearning. As the model transitions from learned to unlearned state, the feature extraction ability of the model shows a clear discrepancy between natural and generated images. We compute feature similarity of images on unlearned model and original model, using t-SNE to visualize feature distribution of images on models with different unlearning levels.}
  \label{discrepancy}
  \vspace{-0.4cm}
\end{wrapfigure}

To investigate this fundamental question, we develop a controlled ablation framework to degrade LVM capabilities—a strategy motivated by the inherent difficulty of enhancing LVMs' capabilities, aiming to elicit potential differences in LVMs on two types of images. Specifically, we record and analyze the extent to which the features change as a result of the degraded LVMs' capability. Here, we leverage machine unlearning~\citep{bourtoule2021machine} to degrade LVMs by pruning a specific proportion of parameters with the smallest absolute values. As shown in Figure~\ref{discrepancy}, our experimental results reveal a distinct pattern: as LVMs unlearn more knowledge, the extracted features of natural and generated images change differently. Namely, LVMs exhibit disparate forgetting dynamics with degradation of extracted features for natural and generated images. For original LVMs, different categories of natural and generated images are well distinguished. However, as they unlearn more knowledge, their capability to extract features from the generated images decays significantly faster than from the natural ones. These disparate forgetting dynamics establish a connection between generated image detection and machine unlearning.

Leveraging this insight, we propose an \emph{unlearning-based} approach to detect generated images. Rather than training a binary classifier from scratch to distinguish between natural and generated images, we investigate whether this distinction can be achieved by unlearning knowledge in pre-trained LVMs. Specifically, we propose a simple data-free unlearning method by weight pruning~\citep{DBLP:conf/nips/HanPTD15}, which is an effective approach to compress models by removing unimportant parameters. This method is data-free and training-free, holding potential for generalization, as it does not rely on specific types of generative models and natural images.
Meanwhile, we propose a data-driven method that unlearns generated images to facilitate the separation of natural and generated images. 
Experimental results across multiple benchmarks demonstrate that our unlearning-based approach outperforms state-of-the-art methods. Moreover, we further conduct experiments on images generated by inaccessible generative models, i.e., Sora~\citep{sora}, to verify the robustness against domain shifts of our method. Results shown in Table~\ref{compar_sora} demonstrate that our method consistently and significantly outperforms existing methods.

Our main contributions can be summarized as follows:

\begin{itemize}
    \item 
    Employing LVMs for generated image detection holds promise in addressing the challenge of domain shift, but our experimental results show that original LVMs exhibit similar loss values for both natural and generated images. To elicit potential differences, we develop a controlled ablation experiment and reveal that degraded LVMs display distinct patterns when extracting features for these two types of images.
    \item 
    Inspired by the disparate dynamics, we propose two unlearning-based methods to detect generated images: 1) data-free detection inducing unlearning by parameter pruning and 2) data-driven detection optimizing LVMs by unlearning knowledge tied to generated images. 
    This unlearning-based approach shifts the focus from learning boundaries between natural and generated images to unlearning knowledge in pre-trained models.
    \item Comprehensive experiments validate our method across diverse generated image datasets, demonstrating that our method outperforms existing methods. Moreover, experiments on images generated by inaccessible models verify its robustness against domain shifts.
\end{itemize}

\vspace{-0.3cm}

\section{Preliminaries}
\vspace{-0.3cm}
Given a test image \(\mathbf{x}\), the task of AI-generated image detection is to determine whether \(\mathbf{x}\) originates from the natural image distribution or is generated by a generative model. A common approach frames this as a supervised binary classification problem, utilizing a training set comprising labeled samples from both distributions, formalized as follows.

Let \(X^{0} = \{\mathbf{x}^{0}_1, \dots, \mathbf{x}^{0}_{N^0}\}\) represent a set of \(N^0\) AI-generated images labeled as \(0\), and \(X^{1} = \{\mathbf{x}^{1}_1, \dots, \mathbf{x}^{1}_{N^1}\}\) denote \(N^1\) natural images labeled as \(1\). The objective is to learn a feature extractor \(F(\cdot; \theta_F)\) and a binary classifier \(D(\cdot; \theta_D)\), parameterized by \(\theta_F\) and \(\theta_D\), respectively, by minimizing a classification loss \(\ell(\cdot)\) on the combined dataset:
\vspace{-0.1cm}
\begin{equation}
D, F = \arg\min_{\theta_D, \theta_F} \ \ell\left(D(F(\mathbf{x}; \theta_F); \theta_D), y\right),
\end{equation}
where \(y \in \{0, 1\}\) is the ground-truth label for input \(\mathbf{x}\).

Once trained, the model computes a decision score \(s(\mathbf{x}) = D(F(\mathbf{x}))\) for each test image \(\mathbf{x}\). A hard prediction is obtained by thresholding this score at a fixed value \(\tau\):
\vspace{-0.1cm}
\begin{equation}
\label{decision_function}
\operatorname{pred}(\mathbf{x}) =
\begin{cases}
\text{generated}, & \text{if } s(\mathbf{x}) < \tau, \\
\text{natural}, & \text{otherwise}.
\end{cases}
\end{equation}

The robustness of this framework is fundamentally limited by the empirical coverage of the training data. Effective generalization to unseen distributions requires the learned representation \(F(\mathbf{x})\) captures features invariant to variations in generative models. However, training sets often provide limited coverage of the diverse generative mechanisms encountered in practice. To enhance the robustness of AI-generated image detectors, prior works~\citep{DBLP:conf/icml/ChenZYY24, zhu2023gendet} employ techniques such as data augmentation or adversarial training. Despite these efforts, such methods exhibit limited transferability to samples from unseen generative distributions, highlighting the need for approaches rooted in principled distributional modeling beyond empirical discriminative techniques.

\begin{figure*}[t]
  \centering
  \includegraphics[width=0.68\textwidth]{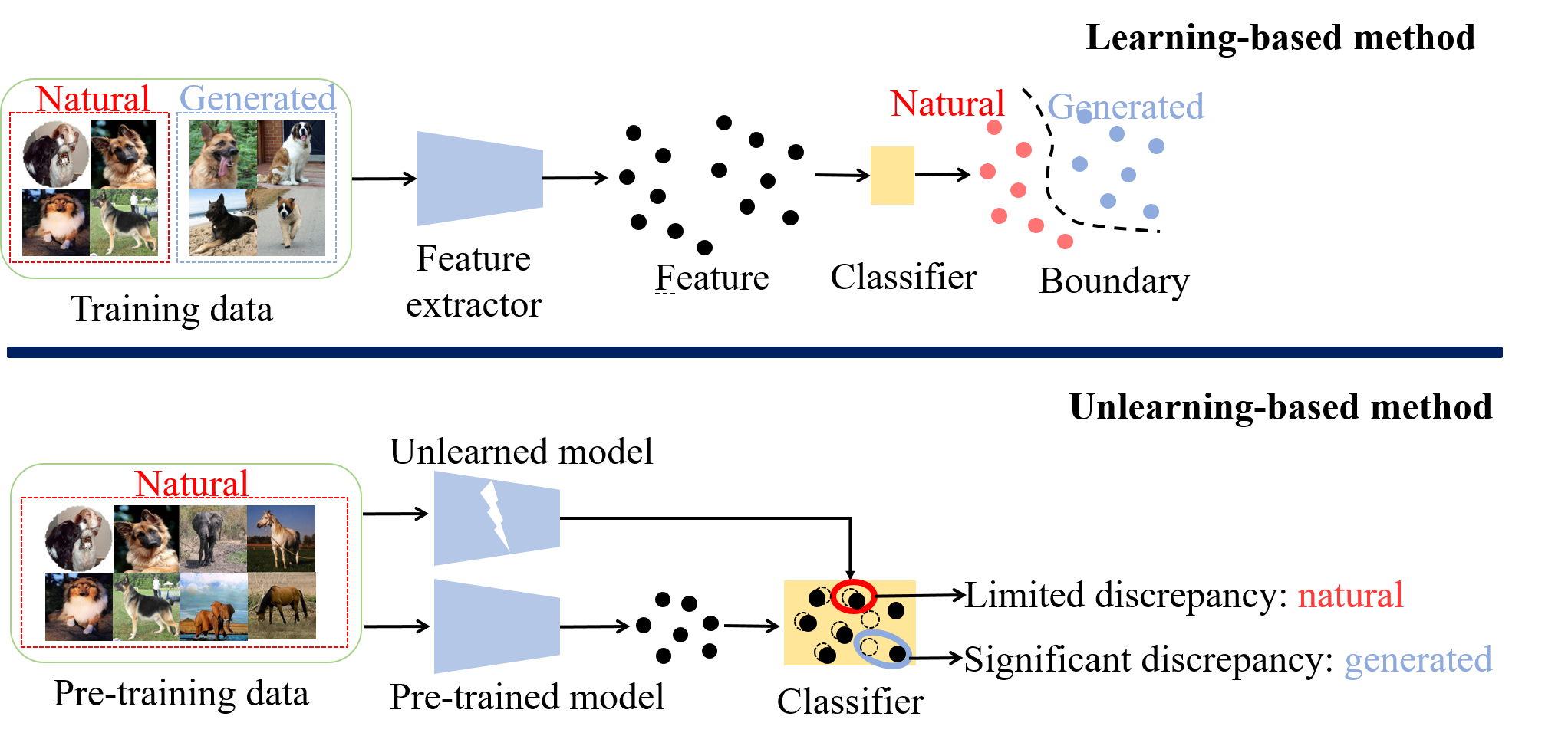}
  \vspace{-0.3cm}
  \caption{Differences and connections between learning-based detection and unlearning-based detection. Learning-based methods aim to introduce a boundary between natural and generated images, leading to the reliance on the collected training data. In contrast, our unlearning-based method leverages dynamic classifiers for detection.}
  \label{pipeline}
  \vspace{-0.5cm}
\end{figure*}

\vspace{-0.2cm}
\section{Methodology}
\vspace{-0.1cm}
\subsection{Motivation}

Current methods for detecting generated images struggle to generalize to unseen generative distributions. A natural approach to improve generalization is to expand the training dataset, leading to the consideration of large-scale pre-trained models such as DINOv2~\citep{DBLP:journals/tmlr/OquabDMVSKFHMEA24}, which offer robust generalization from extensive pre-training. However, as shown in Figure~\ref{fig:same_loss}, these models exhibit comparable low loss on both natural and generated images, reflecting their strong comprehension of both domains, which prevents their direct use for discrimination.

To address this, we propose selectively degrading the model’s ability to interpret generated images, thus inducing differential performance between natural and generated images. We employ machine unlearning~\citep{bourtoule2021machine}, a technique to mitigate the influence of specific data, to adapt pre-trained model to forget generated images while preserving its representation of natural images. A naive unlearning objective function for classification is defined as:

\begin{equation}
\label{unlearnign_obj}
\begin{split}
\mathcal{L}_{\text{unlearn}}(\theta) = & \mathbb{E}_{(x, y) \sim \mathcal{D}_{\text{forget}}} \left[ -\sum_{i} \frac{1}{K} \log f_i(x; \theta) \right] + \lambda \mathbb{E}_{(x, y) \sim \mathcal{D}_{\text{retain}}} \left[ -\sum_{i} y_i \log f_i(x; \theta) \right],
\end{split}
\end{equation}
where, in general, \(\mathcal{D}_{\text{forget}}\) and \(\mathcal{D}_{\text{retain}}\) represent the data distributions to be forgotten and preserved, respectively; in this work, they correspond to generated and natural images.
 



  

\begin{wrapfigure}{r}{0.42\columnwidth}
  \centering
  \vspace{-0.3cm}
  \includegraphics[width=0.40\columnwidth, trim=0 10 0 0, clip]{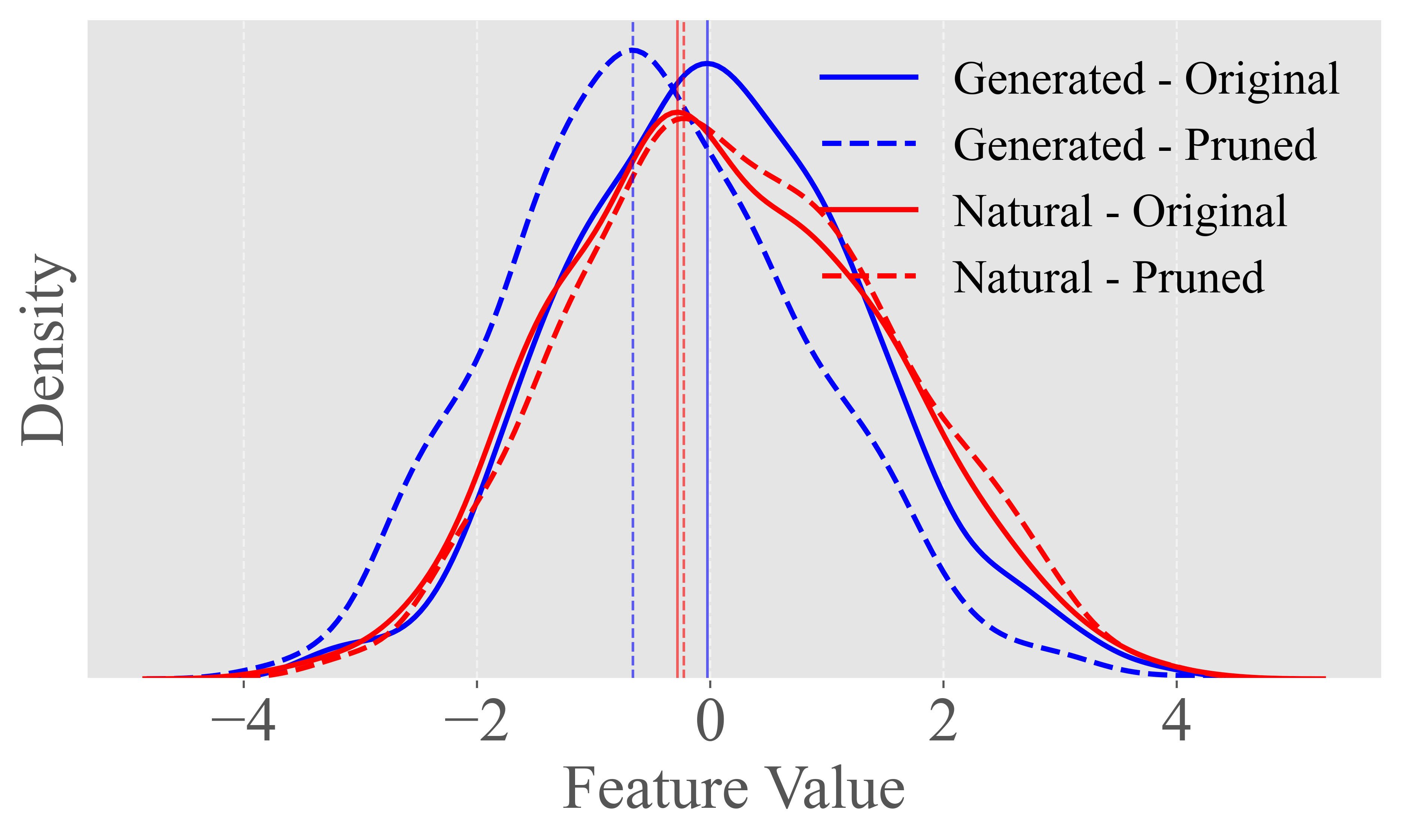}
  \vspace{-0.2cm}
  \caption{The feature shift caused by weight perturbation is more significant in the generated images.}
  \label{fig:feature_shift}
  \vspace{-0.2cm}
\end{wrapfigure}

\subsection{Data-free Unlearning} Eq.~\ref{unlearnign_obj} presents a general machine unlearning framework that requires collecting both natural and generated images to fine-tune the model, incurring additional computational costs. In this study, we investigate strategies to induce selective unlearning in LVMs, prioritizing the retention of natural image knowledge while forgetting generated image knowledge, particularly in scenarios where generated images are unavailable. To do so, we propose a training-free unlearning method by leveraging insights from the pre-training process of large-scale models, achieving effective data-free unlearning. Our idea comes from previous work on the effect of weight pruning on neural networks~\citep{hooker2019compressed}, where the authors found that compression has a greater impact on the long-tail of less frequent instances than on higher frequency instances. For widely used pre-trained vision models such as DINOv2~\citep{DBLP:journals/tmlr/OquabDMVSKFHMEA24} and CLIP~\citep{DBLP:conf/icml/RadfordKHRGASAM21}, the pre-training corpora are typically
dominated by natural images, while synthetic images are either underrepresented or distributionally atypical.
Therefore, generated images tend to behave as relatively rare or distribution-shifted samples with respect to
the learned representation space. Under such a view, weight pruning may disproportionately affect generated
images compared with natural images. As illustrated in Figure~\ref{fig:feature_shift}, pruning the weights of DINOv2 results in a significantly larger feature displacement for generated images than for natural images. This phenomenon is consistently observed in Figures~\ref{discrepancy}.

Based on this, we propose a training-free method for AI-generated image detection, using the feature similarity between a learned model and its unlearned (pruned) version as the criterion:
\begin{equation}
s(x) = \cos(F(x; \theta_F), F(x; \theta_F')),    
 \label{score_function}   
\end{equation}
where \( \cos \) denotes cosine similarity, and \( \theta_F' \) is the unlearned version of \( \theta_F \).


\subsection{A Local Perturbation Analysis of Weight Pruning}
\label{sec:theory_pruning}

While prior studies~\citep{hooker2019compressed} have observed that model compression can affect in-distribution and distribution-shifted samples differently, the mechanism behind such discrepancy remains under-explored.
In this section, we provide a local perturbation analysis to characterize when weight pruning can amplify the representation discrepancy between natural and generated images.

We consider a representation model \( f(x;\theta) \), where \( x \in \mathcal{X} \subset \mathbb{R}^{m} \) is the input image, \( \theta \in \mathbb{R}^{d} \) is the model parameter, and
\begin{equation}
f:\mathcal{X}\times\mathbb{R}^{d}\to\mathbb{R}^{k}
\end{equation}
denotes the output feature embedding.
In our setting, natural images are treated as samples from the reference distribution \( \mathcal{D}_{\mathrm{nat}} \), while generated images are modeled as distribution-shifted samples from \( \mathcal{D}_{\mathrm{gen}} \) with respect to the representation learned from natural-image-dominated pre-training data.

\begin{definition}[Weight Pruning]
\label{def:weight_pruning}
For a model with parameters \( \theta \in \mathbb{R}^{d} \), weight pruning discards weights whose absolute values are below a threshold \( \epsilon>0 \), defined as
\begin{equation}
\theta'_i =
\begin{cases}
\theta_i, & \text{if } |\theta_i| \geq \epsilon, \\
0, & \text{if } |\theta_i| < \epsilon.
\end{cases}
\end{equation}
For notational convenience, we define the pruning direction as
\begin{equation}
    \Delta\theta = \theta-\theta' .
\end{equation}
\end{definition}

\begin{definition}[Pruning-Induced Output Difference]
\label{def:output_difference}
The mean squared output difference induced by weight pruning over distribution \( \mathcal{D} \) is defined as
\begin{equation}
\Delta_{\mathrm{out}}^{\mathcal{D}}
=
\mathbb{E}_{x\sim\mathcal{D}}
\left[
\left\|
f(x;\theta')-f(x;\theta)
\right\|_2^2
\right],
\end{equation}
where \( f(x;\theta) \) and \( f(x;\theta') \) are the representations extracted by the original and pruned models, respectively.
\end{definition}

Since our detector uses the feature discrepancy between the original and pruned models, we analyze the pruning-induced output shift through the Jacobian of the representation with respect to model parameters.

\begin{assumption}[Directional Representation Sensitivity Gap]
\label{assump:representation_gap}
For the pruning direction \( \Delta\theta=\theta-\theta' \), define the directional representation sensitivity over distribution \( \mathcal{D} \) as
\begin{equation}
\mathcal{S}_{\mathcal{D}}(\Delta\theta)
=
\mathbb{E}_{x\sim\mathcal{D}}
\left[
\left\|
J_{\theta}f(x;\theta)\Delta\theta
\right\|_2^2
\right],
\end{equation}
where \( J_{\theta}f(x;\theta) \) denotes the Jacobian of the representation with respect to model parameters.
For the generated-image distributions considered in this work, we assume that generated images exhibit larger directional representation sensitivity than natural images along the pruning direction. Specifically, there exists \( \omega>0 \) such that
\begin{equation}
\label{eq:representation_gap}
\mathcal{S}_{\mathcal{D}_{\mathrm{gen}}}(\Delta\theta)
-
\mathcal{S}_{\mathcal{D}_{\mathrm{nat}}}(\Delta\theta)
\geq
\omega
\|\Delta\theta\|_2^2 .
\end{equation}
\end{assumption}

\begin{proposition}[Output Difference under Directional Sensitivity Gap]
\label{prop:output_difference}
Assume that \( f(x;\theta) \) is twice differentiable with respect to \( \theta \) in a local neighborhood of \( \theta \), and its second-order derivative is locally bounded.
Then there exists a constant \( C_f>0 \) such that, under Assumption~\ref{assump:representation_gap}, the pruning-induced output differences satisfy
\begin{equation}
\label{eq:output_difference_bound}
\Delta_{\mathrm{out}}^{\mathcal{D}_{\mathrm{gen}}}
-
\Delta_{\mathrm{out}}^{\mathcal{D}_{\mathrm{nat}}}
\geq
\omega
\|\Delta\theta\|_2^2
-
C_f
\|\Delta\theta\|_2^3 .
\end{equation}
Consequently, if
\begin{equation}
\label{eq:output_positive_condition}
\omega
>
C_f
\|\Delta\theta\|_2 ,
\end{equation}
then pruning induces a larger output difference for generated images than for natural images:
\begin{equation}
\Delta_{\mathrm{out}}^{\mathcal{D}_{\mathrm{gen}}}
>
\Delta_{\mathrm{out}}^{\mathcal{D}_{\mathrm{nat}}}.
\end{equation}
\end{proposition}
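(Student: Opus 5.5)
Since $\theta'=\theta-\Delta\theta$, the plan is to Taylor-expand $f(x;\theta')$ around $\theta$ in the parameter argument, turn the squared output difference into the directional sensitivity of Assumption~\ref{assump:representation_gap} plus a controlled remainder, and then subtract the natural-image bound from the generated-image bound. By the assumed twice-differentiability and local boundedness of the second derivative, Taylor's theorem with integral remainder gives, for each $x$,
\begin{equation*}
f(x;\theta')-f(x;\theta) = -J_\theta f(x;\theta)\Delta\theta + R(x),\qquad \|R(x)\|_2\le \tfrac{M}{2}\|\Delta\theta\|_2^2,
\end{equation*}
where $M$ bounds the operator norm of the second derivative $\nabla^2_\theta f(x;\cdot)$ uniformly over $x$ in the supports and over the segment between $\theta'$ and $\theta$. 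We implicitly require that segment to lie in the local neighborhood; this is where smallness of $\|\Delta\theta\|_2$ enters.

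Expanding the square with $a(x)=J_\theta f(x;\theta)\Delta\theta$ gives
\begin{equation*}
\|f(x;\theta')-f(x;\theta)\|_2^2 = \|a(x)\|_2^2 - 2\langle a(x),R(x)\rangle + \|R(x)\|_2^2 .
\end{equation*}
Let $G=\sup_x\|J_\theta f(x;\theta)\|_{\mathrm{op}}$, which is finite by the same local regularity, assuming uniformity in $x$. Cauchy--Schwarz then yields $|\langle a,R\rangle|\le G\tfrac{M}{2}\|\Delta\theta\|_2^3$. Taking expectations and using the triangle inequality, for both $\mathcal{D}\in\{\mathcal{D}_{\mathrm{gen}},\mathcal{D}_{\mathrm{nat}}\}$ we get
\begin{equation*}
\bigl|\Delta_{\mathrm{out}}^{\mathcal{D}}-\mathcal{S}_{\mathcal{D}}(\Delta\theta)\bigr|\le GM\|\Delta\theta\|_2^3+\tfrac{M^2}{4}\|\Delta\theta\|_2^4 .
\end{equation*}
To keep only a cubic term, restrict to $\|\Delta\theta\|_2\le r$ for the neighborhood radius $r$. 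The quartic term is then at most $\tfrac{M^2 r}{4}\|\Delta\theta\|_2^3$, and the error is at most $\tfrac{C_f}{2}\|\Delta\theta\|_2^3$ with $C_f=2GM+\tfrac{M^2 r}{2}$.

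Applying this as a lower bound for $\mathcal{D}_{\mathrm{gen}}$ and an upper bound for $\mathcal{D}_{\mathrm{nat}}$ and subtracting gives
\begin{equation*}
\Delta_{\mathrm{out}}^{\mathcal{D}_{\mathrm{gen}}}-\Delta_{\mathrm{out}}^{\mathcal{D}_{\mathrm{nat}}}\ge \mathcal{S}_{\mathcal{D}_{\mathrm{gen}}}(\Delta\theta)-\mathcal{S}_{\mathcal{D}_{\mathrm{nat}}}(\Delta\theta)-C_f\|\Delta\theta\|_2^3 .
\end{equation*}
Inserting \eqref{eq:representation_gap} yields \eqref{eq:output_difference_bound}. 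The consequence is immediate: if \eqref{eq:output_positive_condition} holds, the right-hand side equals $\|\Delta\theta\|_2^2(\omega-C_f\|\Delta\theta\|_2)>0$ whenever $\Delta\theta\neq 0$. If $\Delta\theta=0$, the strict conclusion fails trivially, so we note that nontrivial pruning is needed.

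The main obstacle is justifying uniform constants. The bounds $G$ and $M$ must hold uniformly in $x$ (or at least in expectation, in which case one uses $\mathbb{E}\|J\|$ and $\mathbb{E}\|\nabla^2 f\|^2$ via Cauchy--Schwarz instead), and the whole segment $[\theta',\theta]$ must stay in the neighborhood where the second derivative is bounded. I would state these as the precise reading of ``locally bounded'' (e.g., compact input domain $\mathcal{X}$ with continuous derivatives on a closed ball around $\theta$), which makes $G$ and $M$ finite maxima of continuous functions on a compact set.
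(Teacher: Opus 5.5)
Your proposal is correct and follows the same route as the paper's proof. Both use a first-order Taylor expansion in $\theta$ with a quadratic remainder, expand the squared norm, bound the cross and remainder terms by a cubic term, apply the lower bound to $\mathcal{D}_{\mathrm{gen}}$ and the upper bound to $\mathcal{D}_{\mathrm{nat}}$, and then invoke Assumption~\ref{assump:representation_gap}. You also make explicit several points the paper leaves implicit: absorbing the quartic term via $\|\Delta\theta\|_2\le r$, requiring the Jacobian and second-derivative bounds to hold uniformly in $x$, and noting the degenerate case $\Delta\theta=0$, where the strict conclusion fails.
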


\begin{table*}[h]
\setlength{\tabcolsep}{3pt} 
\caption{AI-generated image detection performance on ImageNet. Values are percentages. \textbf{Bold} numbers are superior results. We compare training methods and training-free methods separately.}
\label{compar_imagenet}
\resizebox{\textwidth}{!}{%
\begin{tabular}{@{}lccccccccccccccccccccccc@{}}
\toprule
                     & \multicolumn{20}{c}{Models}                               & \multicolumn{2}{c}{} \\
 &
  \multicolumn{2}{c}{ADM} &
  \multicolumn{2}{c}{ADMG} &
  \multicolumn{2}{c}{LDM} &
  \multicolumn{2}{c}{DiT} &
  \multicolumn{2}{c}{BigGAN} &
  \multicolumn{2}{c}{GigaGAN} &
  \multicolumn{2}{c}{StyleGAN XL} &
  \multicolumn{2}{c}{RQ-Transformer} &
  \multicolumn{2}{c}{Mask GIT} &
  \multicolumn{2}{c}{\multirow{-2}{*}{Average}} \\ \cmidrule(l){2-3} \cmidrule(l){4-5}\cmidrule(l){6-7}  \cmidrule(l){8-9}\cmidrule(l){10-11}\cmidrule(l){12-13}\cmidrule(l){14-15}\cmidrule(l){16-17}\cmidrule(l){18-19}
\multirow{-3}{*}{Methods}  &
  AUROC &
  AP &
  AUROC&
  AP &
  AUROC&
  AP &
  AUROC&
  AP &
  AUROC&
  AP &
  AUROC&
  AP &
  AUROC&
  AP &
  AUROC&
  AP &
  AUROC&
  AP &
  AUROC&
  AP &\\ \midrule
 &&&&&&&&&\multicolumn{3}{c}{Training-free Methods}&&&&&&&&&&\\
 AEROBLADE &55.61 &54.26 &61.57 &56.58 &62.67 &60.93 &85.88 &87.71 &44.36 &45.66 &47.39 &48.14 &47.28 &48.54 &67.05 &67.69 &48.05 &48.75 &57.87 &57.85 \\
 \rowcolor{pink!20} 
 Data-free Unlearning &\textbf{91.97} &\textbf{90.44} &\textbf{86.82} &\textbf{85.14} &\textbf{87.62} &\textbf{85.91} &\textbf{85.74} &\textbf{83.84} &\textbf{96.37} &\textbf{96.52} &\textbf{94.39} &\textbf{94.23} &\textbf{96.47} &\textbf{96.53} &\textbf{95.19} &\textbf{95.24} &\textbf{95.27} &\textbf{95.17} &\textbf{92.20} &\textbf{91.45}\\
 \midrule

 &&&&&&&&& \multicolumn{3}{c}{Training Methods}&&&&&&&&&&\\
CNNspot  &62.25 &63.13 &63.28 &62.27 &63.16 &64.81 &62.85 &61.16 &85.71 &84.93 &74.85 &71.45 &68.41 &68.67 &61.83 &62.91 &60.98 &61.69 &67.04 &66.78 \\
UnivFD &83.37 &82.95 &79.60 &78.15 &80.35 &79.71 &82.93 &81.72 &93.07 &92.77 &87.45 &84.88 &85.36 &83.15 &85.19 &84.22 &90.82 &90.71 &85.35 &84.25\\
DIRE  &51.82 &50.29 &53.14 &52.96 &52.83 &51.84 &54.67 &55.10 &51.62 &50.83 &50.70 &50.27 &50.95 &51.36 &55.95 &54.83 &52.58 &52.10 &52.70 &52.18 \\
NPR &85.68 &80.86 &84.34 &79.79 &91.98 &86.96 &86.15 &81.26 &89.73 &84.46 &82.21 &78.20 &84.13 &78.73 &80.21 &73.21 &89.61 &84.15 &86.00 &80.84 \\ 
PatchCraft&81.83 &79.65 &70.88 &69.36 &68.47 &65.19 &75.38 &73.29 &99.85 &99.26 &98.55 &97.91 &96.33 &96.25 &91.28 &91.47 &92.56&92.17 &86.13 &84.95\\
FatFormer&91.77 &90.36 &83.58 &83.17 &92.58 &92.06 &86.93 &85.14 &98.76 &98.47 &97.65 &98.02 &97.64 &97.57 &96.55 &95.96 &97.65 &97.27 &93.68 &93.11\\
 DRCT &90.26 &90.07 &85.74 &83.85 &90.24 &89.88 &88.27 &89.06 &95.87 &94.99 &86.89 &86.12 &89.11 &88.39 &92.38 &92.41 &94.44 &94.47 &90.36 &89.92\\
 AIDE &90.87 &90.17 &87.91 &85.52 &93.57 &93.89 &89.87 &88.16 &88.48 &88.12 &97.93 &96.58 &96.59 &95.97 &98.31 &97.86 &99.87 &99.56 &93.71 &92.87\\
 \rowcolor{pink!20} 
 Data-driven Unlearning &\textbf{96.86} &\textbf{96.69} &\textbf{94.92} &\textbf{94.77} &\textbf{98.32} &\textbf{98.50} &\textbf{96.25} &\textbf{96.52} &\textbf{99.96} &\textbf{99.96} &\textbf{99.43} &\textbf{99.54} &\textbf{99.73} &\textbf{99.74} &\textbf{99.26} &\textbf{99.34} &\textbf{99.90} &\textbf{99.91} &\textbf{98.29} &\textbf{98.33} \\
 \bottomrule
\end{tabular}
}
\vspace{-0.2cm}
\end{table*}

\begin{figure*}[h]
\centering
\begin{minipage}{0.48\textwidth}
  \centering
  \vspace{-0.4cm}
  \includegraphics[width=1\linewidth]{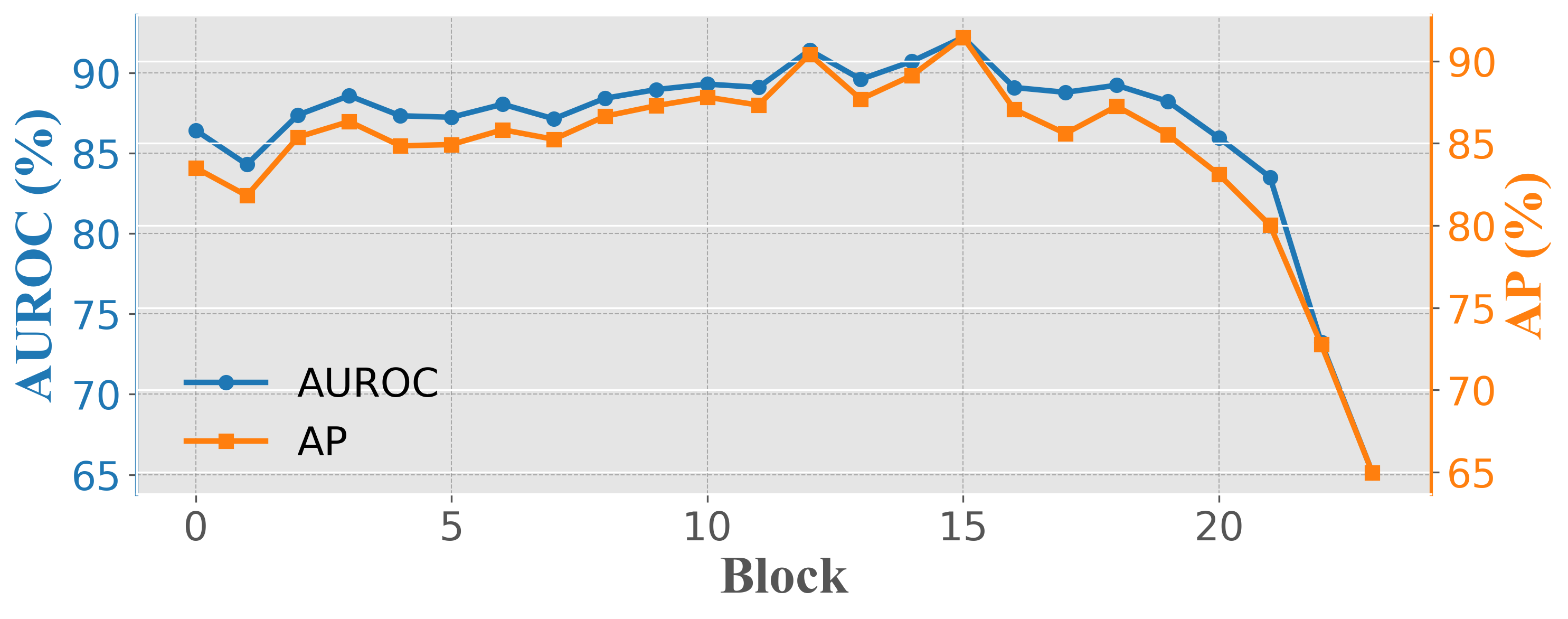}
  \caption{\centering{The effects of pruning blocks.}}
  \vspace{-0.4cm}
  \label{eff_block}
\end{minipage}%
\hfill  
\begin{minipage}{0.48\textwidth}
  \centering
  \includegraphics[width=1\linewidth]{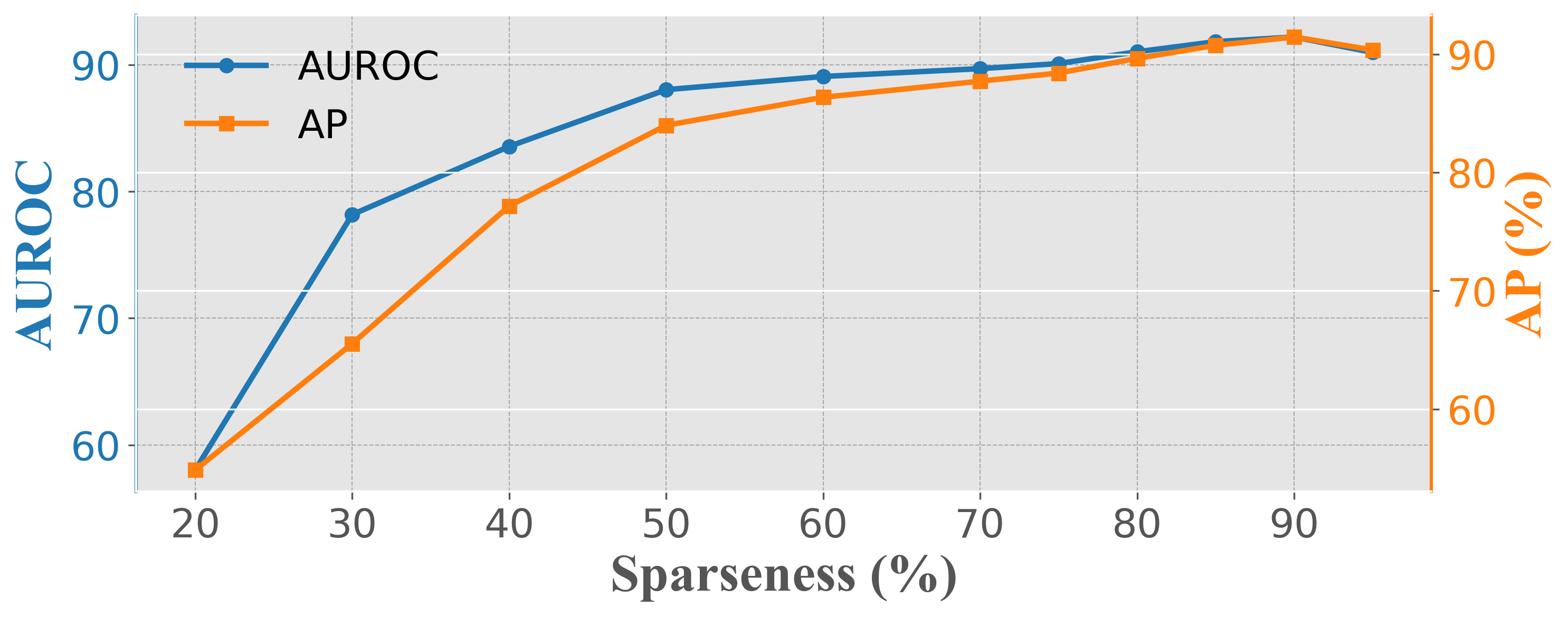}
  \caption{\centering{The effect of pruning ratio.}}
  \label{eff_ratio}
\end{minipage}
\vspace{-0.4cm}
\end{figure*}

\begin{table*}[h]
\centering
\setlength{\tabcolsep}{3pt} 
\caption{Accuracy (\%) of different detectors on Chameleon.}
\label{compar_Chameleon}
\resizebox{1\textwidth}{!}{%
\begin{tabular}{@{}lc>{\columncolor{pink!20}}c|cccccccccccc>{\columncolor{pink!20}}c@{}}
\toprule
 Training Set &AEROBLADE &\multicolumn{1}{c}{\makecell{Data-free\\Unlearning}} &
  CNNSpot &
  FreDect &
  Fusing &
  GramNet &
  LNP &
  UnivFD &
  DIRE 
  &NPR 
  &AIDE
  &DRCT
  &PatchCraft &FatFormer &\multicolumn{1}{c}{\makecell{Data-driven\\Unlearning}}\\
  \midrule
ProGAN &55.29&\textbf{59.17}&56.94 &55.62 &56.98 &58.94 &57.11 &57.22 &58.19 &57.29 &56.45&57.89 &53.76 &55.78&\textbf{60.59}\\
SD v1.4 &55.29&\textbf{59.17}&60.11 &56.86 &57.07 &60.95 &55.63 &55.62 &59.71 &58.13 &61.10 &60.33&56.32 &59.34&\textbf{71.15}\\
All GenImage&55.29 &\textbf{59.17}&60.89 &57.22 &57.09 &59.81 &58.52 &60.42 & 57.83&57.81 &63.89&61.97 &55.70&60.59 &\textbf{72.89}\\
 \bottomrule
\end{tabular}
}
\vspace{-0.2cm}
\end{table*}

\begin{table*}[h]
\centering
\setlength{\tabcolsep}{3pt} 
\caption{AI-generated image detection performance (ACC, \%) on GenImage.}
\label{compar_genimage}
\resizebox{0.75\textwidth}{!}{%
\begin{tabular}{@{}lcccccccccccc@{}}
\toprule
 Methods &
  Midjourney &SD V1.4&
  SD V1.5 &
  ADM &
  GLIDE &
  Wukong &
  VQDM &
  BigGAN &
  Average                   \\
  \midrule
   &&&\multicolumn{3}{c}{Training-free Methods}\\
AEROBLADE &\textbf{80.3} &\textbf{87.5} &\textbf{86.8} &67.2 &\textbf{81.5} &\textbf{83.7} &51.1 &52.5 &73.8\\
 \rowcolor{pink!20}
 Data-free Unlearning &79.9	&80.8	&79.6	&\textbf{77.7}	&79.1	&82.7	&\textbf{87.1}	&\textbf{87.3}	&\textbf{81.8}\\
 \midrule
  &&&\multicolumn{3}{c}{Training Methods}\\
CNNspot &52.8 & 96.3&95.9 &50.1 &39.8 &78.6 &53.4 &46.8 &64.2\\
Spec &52.0 &99.4 &99.2 &49.7 &49.8 &94.8 &55.6 &49.8 &68.8\\
F3Net &50.1 &99.9 &\textbf{99.9} &49.9 &50.0 &\textbf{99.9} &49.9 &49.9 &68.7\\
GramNet &54.2 &99.2 &99.1 &50.3 &54.6 &98.9 &50.8 &51.7 &69.9\\
DIRE &60.2 &99.9 &99.8 &50.9 &55.0 &99.2 &50.1 &50.2 &70.7\\
UnivFD &73.2 &84.2 &84.0 &55.2 &76.9 &75.6 &56.9 &80.3 &73.3\\
PatchCraft &79.0 &89.5 &89.3 &77.3 &78.4 &89.3 &83.7 &72.4 &82.3\\
NPR &81.0 &98.2 &97.9 &76.9 &89.8 &96.9 &84.1 &84.2 &88.6\\
FatFormer &92.7 &\textbf{100.0} &\textbf{99.9} &75.9 &88.0 &99.9 &\textbf{98.8} &55.8 &88.9\\
GenDet &89.6 &96.1  &96.1 &58.0 &78.4 &92.8 &66.5 &75.0 &81.6\\
DRCT &91.5 &95.0 &94.4 &\textbf{79.4} &89.1 &94.6 &90.0 &81.6 &89.4\\
AIDE &79.4 &99.7 &99.8 &78.5 &\textbf{91.8} &98.7 &80.3 &66.9 &86.9\\
SAFE &\textbf{98.3}	&99.9 &99.7 &59.5 &88.5 &99.3 &90.5 &62.1	&87.2\\
 \rowcolor{pink!20}
 Data-driven Unlearning &90.8 &95.6 &95.1 &74.5 &90.0 &94.6 &91.8 &\textbf{85.1} &\textbf{89.7}\\
 \bottomrule
 \vspace{-0.6cm}
\end{tabular}
}
\end{table*}

\begin{table*}[h]
\centering
\begin{minipage}{0.48\textwidth}
\centering
\caption{Comparison with linear classification.}
\label{com_linear}
\resizebox{\textwidth}{!}{%
\begin{tabular}{c|p{36pt}|p{23pt}|p{36pt}|p{23pt}}
\toprule
Dataset & \multicolumn{2}{c|}{ImageNet} & \multicolumn{2}{c}{LSUN-BEDROOM} \\
\midrule
Methods & AUROC & \hspace{7pt}AP & AUROC & \hspace{7pt}AP \\
\midrule 
Linear classification & 87.83 & 86.49 & 84.72 & 83.57 \\
Data-free Unlearning & \textbf{92.20} & \textbf{91.45} & \textbf{91.66} &\textbf{90.58} \\
\bottomrule
\end{tabular}
}
\end{minipage}\hfill
\begin{minipage}{0.5\textwidth}
\centering
\setlength{\tabcolsep}{3pt}
\caption{Effect of unlearning different parameters.}
\label{eff_param}
\resizebox{\textwidth}{!}{%
\begin{tabular}{@{}l|cccccccc@{}}
\midrule 
Metrics & Query & Key & Value & fc1 & fc2 & all \\
\midrule
AUROC & 87.55 & 88.21 & 90.08 & 88.19 & \textbf{92.20} & 88.81 \\
AP & 85.06 & 86.02 & 88.77 & 87.07 & \textbf{91.45} & 87.13 \\
\bottomrule
\end{tabular}
}
\end{minipage}
\vspace{-0.4cm}
\end{table*}

The proof is provided in Appendix~\ref{app:proofs}.
Proposition~\ref{prop:output_difference} formalizes the mechanism behind our unlearning-induced detection signal: pruning amplifies the representation discrepancy whenever generated images are more sensitive to the pruning direction than natural images.
In this case, the feature similarity between the original and pruned models naturally becomes lower for generated images, thereby providing a discriminative score for detection.
This theoretical characterization is consistent with the empirical forgetting dynamics in Figures~\ref{discrepancy} and~\ref{fig:feature_shift}.

\subsection{Data-driven Unlearning}
\vspace{-0.2cm}
While data-free unlearning passively removes generative knowledge through structural degradation, it does not explicitly optimize for forgetting. When generated images are available, we can further introduce a \textit{data-driven unlearning} strategy to guide the unlearning process.

Specifically, for natural images, we encourage the model to retain feature similarity with the original model. For generated images, we enforce a margin-based separation: the model should not produce feature representations too similar to the original model. This is implemented via the following loss:
\begin{equation}
\label{final_loss}
\begin{aligned}
\mathcal{L}(\theta_F') = &\mathbb{E}_{x \in X^1} \left[\mathcal{L}_{\text{CE}}(F(x; \theta_F),\ F(x; \theta_F'))\right] + \mathbb{E}_{x \in X^0} \left[\max\left(0,\ \gamma -  \mathcal{L}_{\text{CE}}(F(x; \theta_F),\ F(x; \theta_F'))\right)\right],
\end{aligned}
\end{equation}

where $\gamma$ controls the separation margin for generated images, and $\mathcal{L}_{\text{CE}}$ is applied to $\ell_2$-normalized representations as a feature-level surrogate. Following the standard cross-entropy implementation, one normalized feature vector is treated as logits and the other as a soft target, encouraging larger representation discrepancy between the original and unlearned models. After obtaining the unlearned model, we compute the scoring function using Eq.~(\ref{score_function}) and make a judgment using Eq.~(\ref{decision_function}).



\vspace{-0.2cm}

\subsection{Relation between learning and unlearning approach}
\vspace{-0.1cm}
In a sense, our unlearning-based method has the same structure as the learning-based method. In our unlearning method, the feature extractor is instantiated as a pre-trained large-scale vision model, while the weighting of classifiers is instantiated as the output features of the unlearned model on the test sample. Our unlearning method has two advantages over the learning methods: (1) we directly use the large-scale vision model as the feature extractor, which is pre-trained on a large number of natural images, instead of retraining a feature extractor on a limited number of samples. This allows to obtain more powerful features; and (2) the classifier of the learning approach is fixed once the training is completed; instead, our unlearning method generates an instance-specific classifier for each test sample, which allows the division of the feature space to be independent of the specific natural and generated samples. We illustrate the differences and connections between our unlearning-based method and learning-based approach in Figure~\ref{pipeline}.

\vspace{-0.2cm}
\section{Experiments}
\vspace{-0.3cm}

\subsection{Setup}
\vspace{-0.2cm}
\textbf{Datasets.} Following previous works~\citep{DBLP:conf/iccv/WangBZWHCL23,yan2024sanity}, we conduct experiments on the following benchmarks: \textbf{ImageNet}~\citep{DBLP:conf/cvpr/DengDSLL009}, \textbf{GenImage}~\citep{DBLP:conf/nips/ZhuCYHLLT0H023}, \textbf{DiffusionForensics}~\citep{DBLP:conf/iccv/WangBZWHCL23}, \textbf{Chameleon}~\citep{yan2024sanity}, \textbf{LSUN-BEDROOM}~\citep{DBLP:journals/corr/YuZSSX15}, and \textbf{DRCT-2M}~\citep{DBLP:conf/icml/ChenZYY24}. Besides these public datasets, we evaluate our method on a proprietary dataset generated using the Sora and OpenSora models.

\noindent \textbf{Implementation Details.} For data-free unlearning, we leverage fully parameterized DINOv2 ViT-L/14 as the learned model. It has 24 transformer blocks, and we obtain a sparse model by pruning the parameters of $90\%$ of the minimum magnitude weights of the fc2 layer of its 16th transformer block, and use this model as the unlearned model. We use $1k$ natural images sampled from ImageNet and generated images generated by ProGAN to select hyperparameters. For data-driven unlearning, we leverage LoRA~\citep{DBLP:conf/iclr/HuSWALWWC22} for parameter-efficient fine-tuning. The LoRA layers are applied on the q\_proj and v\_proj layers of DINOv2. $lora\_r$ and $lora\_\alpha$ are set to 8. The margin $\gamma$ is set to 20. When calculating the classification accuracy, the threshold is determined with a validation set, and threshold sensitivity is analyzed in Table~\ref{Threshold_sensitivity}. More detailed illustration is provided in Appendix~\ref{app:imple_detail}.

\noindent \textbf{Evaluation metrics.} Following previous works~\citep{DBLP:conf/cvpr/OjhaLL23, DBLP:conf/iccv/WangBZWHCL23}, we take the following metrics: (1) the average precision (AP); (2) the area under the receiver operating characteristic curve (AUROC) and (3) the classification accuracy (ACC).

\vspace{-0.2cm}

\noindent \textbf{Baselines.} We take the following works as baselines: CNNspot~\citep{DBLP:conf/cvpr/WangW0OE20}, UnivFD~\citep{DBLP:conf/cvpr/OjhaLL23}, DIRE~\citep{DBLP:conf/iccv/WangBZWHCL23}, NPR~\citep{DBLP:journals/corr/abs-2312-10461}, PatchCraft~\citep{zhong2023rich}, FatFormer~\citep{DBLP:conf/cvpr/LiuTTW0Z24},  DRCT~\citep{DBLP:conf/icml/ChenZYY24}, AIDE~\citep{yan2024sanity} and AEROBLADE~\citep{DBLP:journals/corr/abs-2401-17879}. In addition to the above works, we have also compared our methods on some of benchmarks with the following works: FreDect~\citep{DBLP:conf/icml/FrankESFKH20}, Fusing~\citep{DBLP:conf/icip/JuJKXNL22}, Durall~\citep{DBLP:conf/cvpr/DurallKK20}, LNP~\citep{DBLP:conf/eccv/LiuYBXLG22}, F3Net~\citep{DBLP:conf/eccv/QianYSCS20}, SelfBland~\citep{DBLP:conf/cvpr/ShioharaY22}, GANDetection~\citep{DBLP:conf/icip/MandelliBBT22}, LGrad~\citep{DBLP:conf/cvpr/Tan0WGW23}, Spec~\citep{DBLP:conf/wifs/0022KC19}, GenDet~\citep{zhu2023gendet}, GramNet~\citep{DBLP:conf/cvpr/LiuQT20} and SAFE~\citep{DBLP:conf/kdd/LiCHJHF25}.

\vspace{-0.2cm}
\subsection{Experimental results}
\vspace{-0.1cm}

\noindent \textbf{Comparison with other baselines.} As shown in Table~\ref{compar_imagenet}, \ref{compar_Chameleon}, \ref{compar_genimage}, \ref{com_drct}, \ref{compar_lsun} and  \ref{compar_DiffusionForensics}, we compare our method with other baselines on ImageNet, Chameleon, GenImage, DRCT-2M, LSUN-BEDROOM, and DiffusionForensics, respectively. Results show that our unlearning approach achieves better performance compared with learning-based approach. Notably, even without any generated images to guide the unlearning process, our simple weight pruning-based unlearning method can achieve good results. And performance is further enhanced when generated images are incorporated to steer the unlearning procedure. To further illustrate the effectiveness of our method, we count the image feature similarity on learned and unlearned models for natural images and generated images, respectively. As shown in Figure~\ref{compare_cos}, the similarity of natural images is significantly higher than that of various generated images, and this difference effectively distinguishes natural images from generated images.

\begin{figure*}[t]
  \centering
  \begin{subfigure}{0.23\textwidth}
    \centering
    \includegraphics[width=\textwidth]{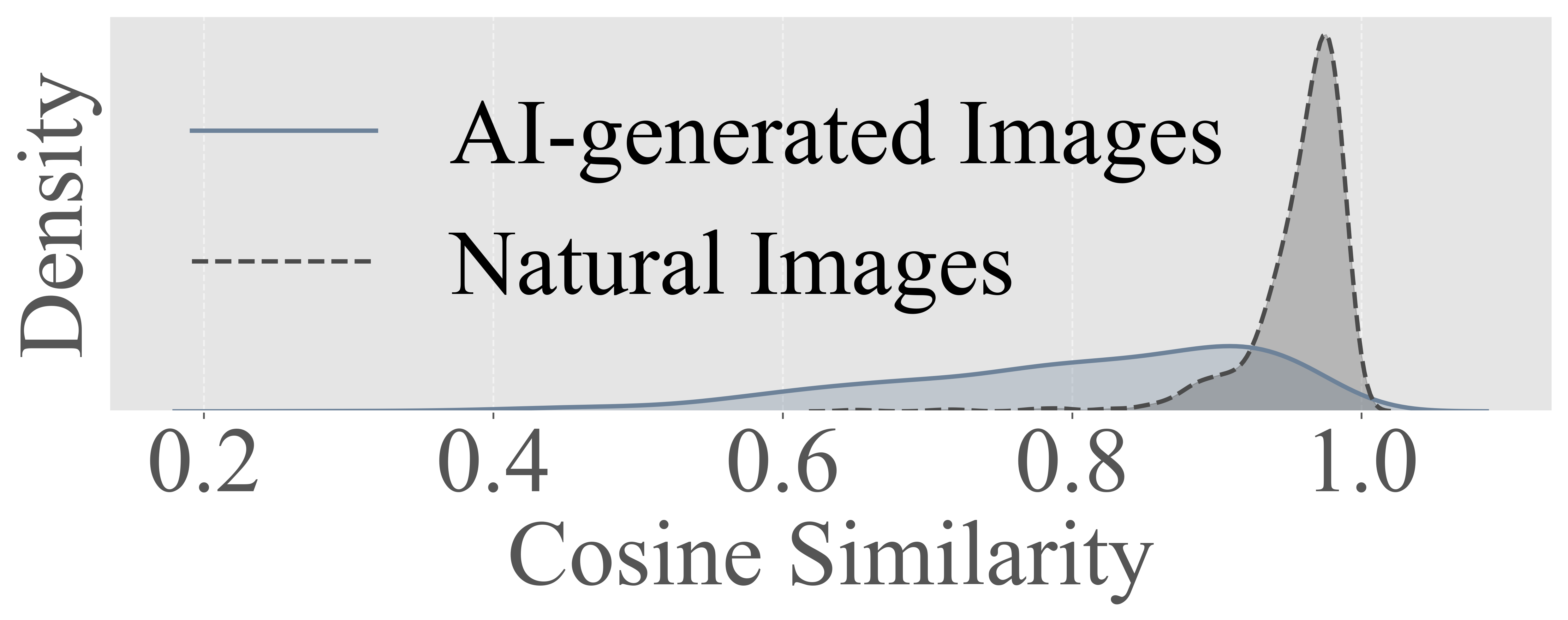}
    \caption{} 
    \label{fig:adm}
  \end{subfigure}
  \hfill 
  \begin{subfigure}{0.23\textwidth}
    \centering
    \includegraphics[width=\textwidth]{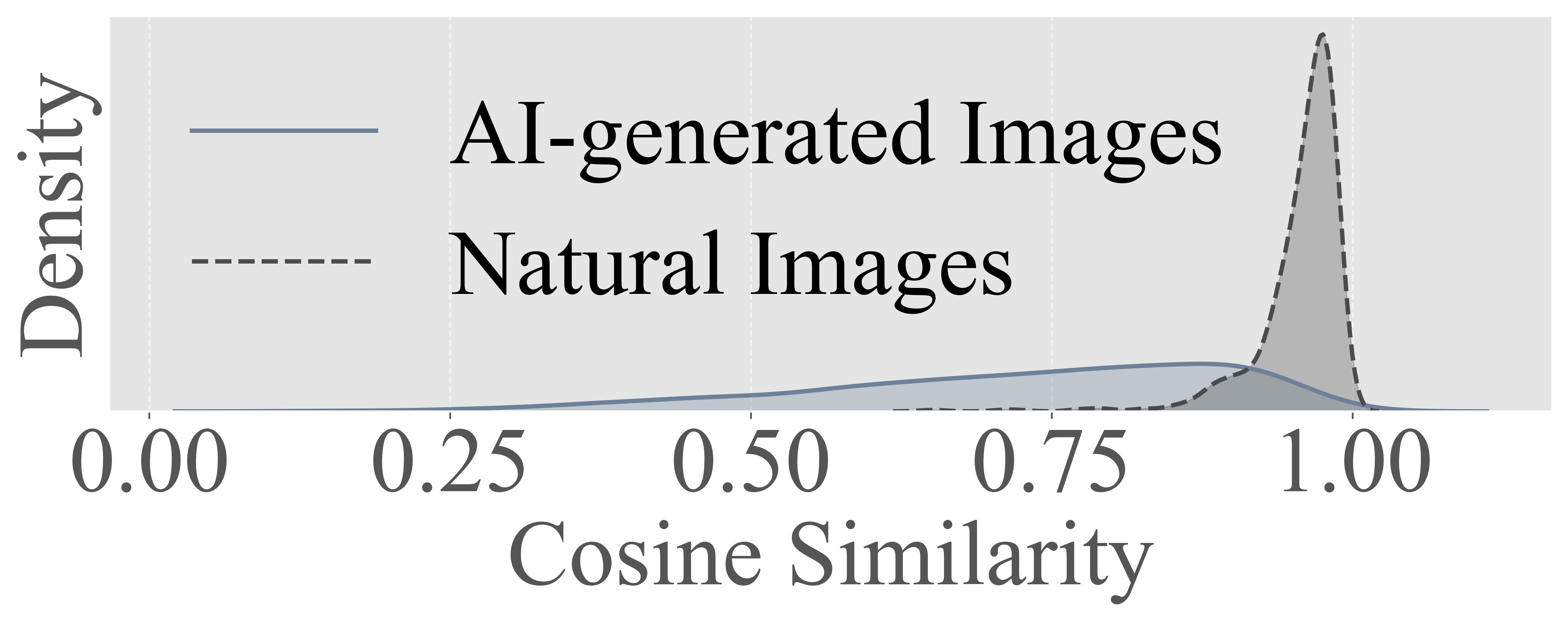}
    \caption{}
    \label{fig:biggan}
  \end{subfigure}
  \hfill
  \begin{subfigure}{0.23\textwidth}
    \centering
    \includegraphics[width=\textwidth]{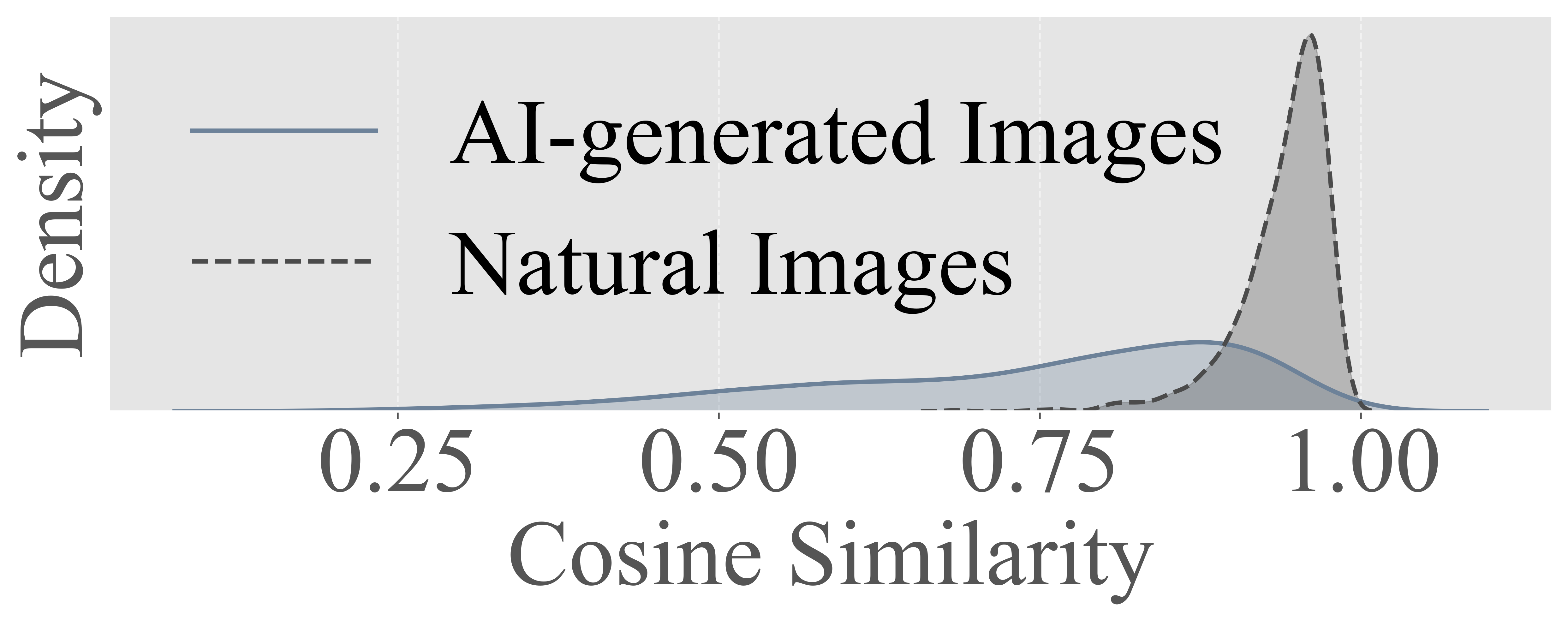}
    \caption{}
    \label{fig:ddpm}
  \end{subfigure}
  \hfill
  \begin{subfigure}{0.23\textwidth}
    \centering
    \includegraphics[width=\textwidth]{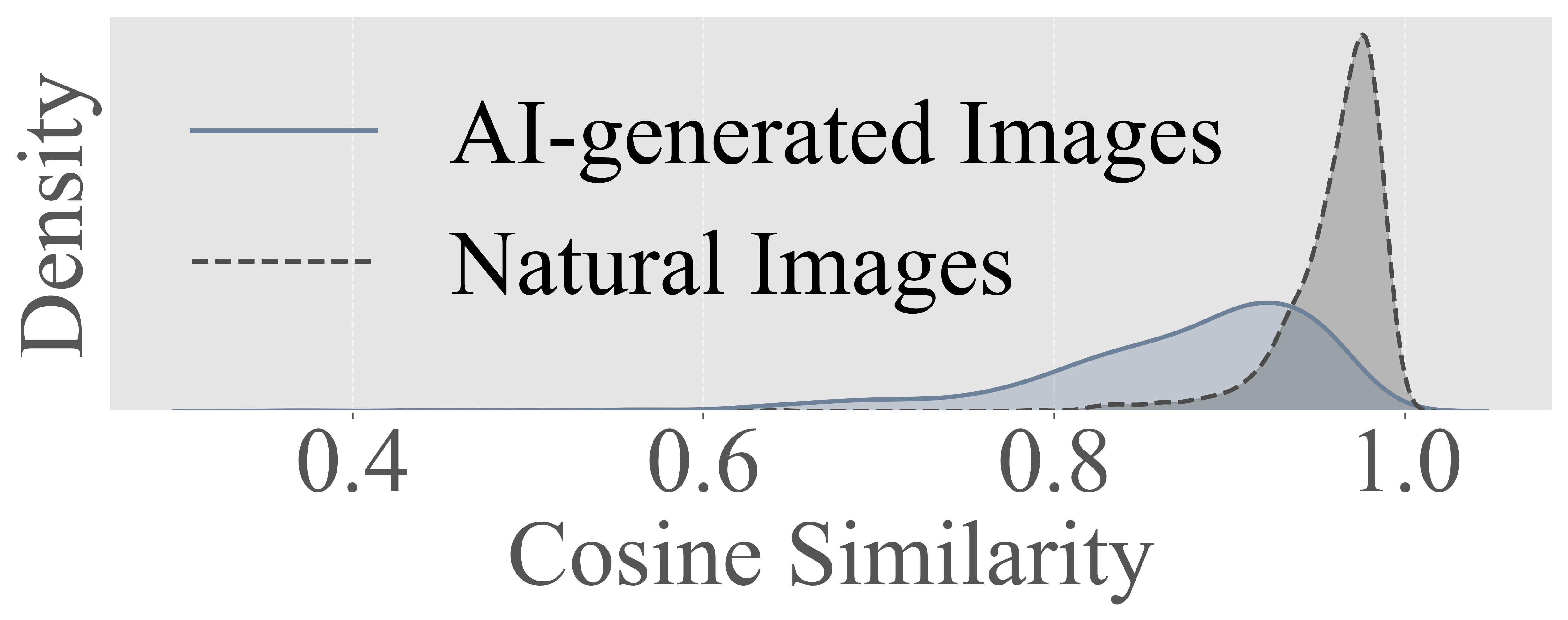}
    \caption{}
    \label{fig:mid}
  \end{subfigure}

  \vspace{-0.2cm} 
  \caption{Comparison of feature similarity on learned and unlearned models. The generated images are from: (a) ADM, (b) BigGAN, (c) DDPM, and (d) Midjourney.}
  \label{compare_cos}
  \vspace{-0.6cm} 
\end{figure*}

\begin{figure*}[t]
  \centering
  \begin{subfigure}{0.32\textwidth}
    \centering
    \includegraphics[width=\textwidth]{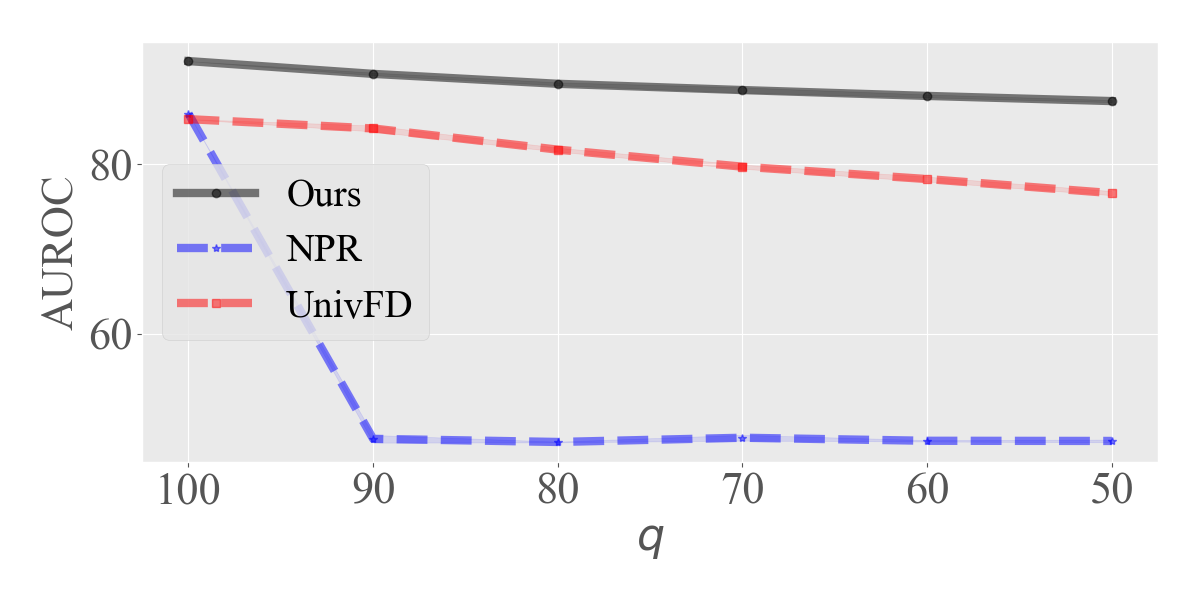}
    \vspace{-0.6cm}
    \caption{} 
    \label{fig:jpeg}
  \end{subfigure}
  \hfill 
  \begin{subfigure}{0.32\textwidth}
    \centering
    \includegraphics[width=\textwidth]{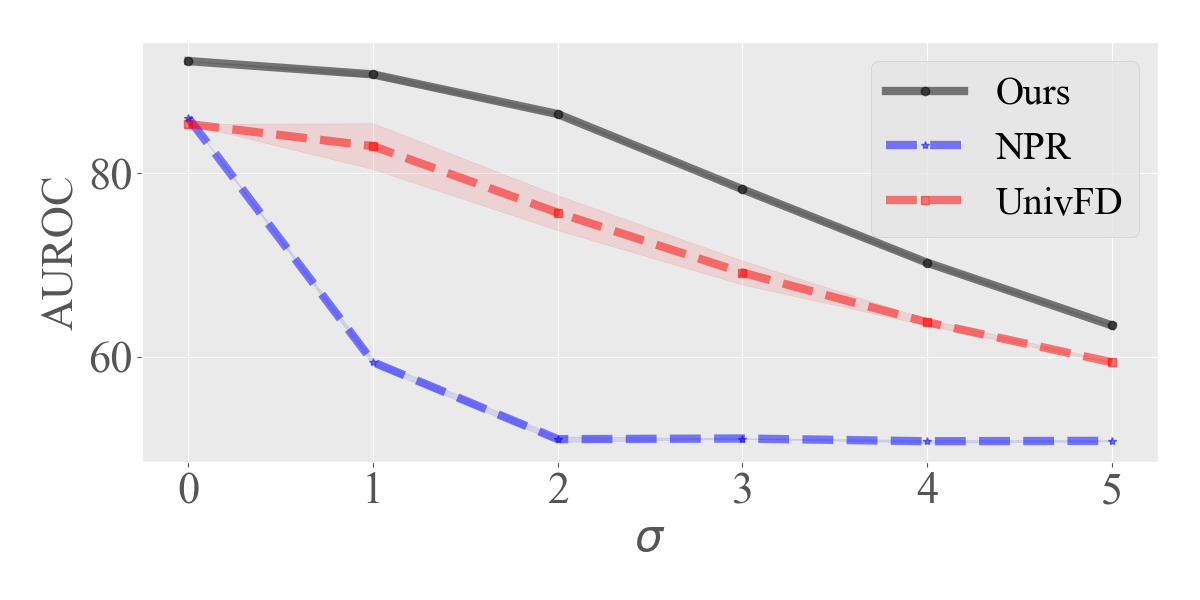}
    \vspace{-0.6cm}
    \caption{} 
    \label{fig:blur}
  \end{subfigure}
  \hfill
  \begin{subfigure}{0.32\textwidth}
    \centering
    \includegraphics[width=\textwidth]{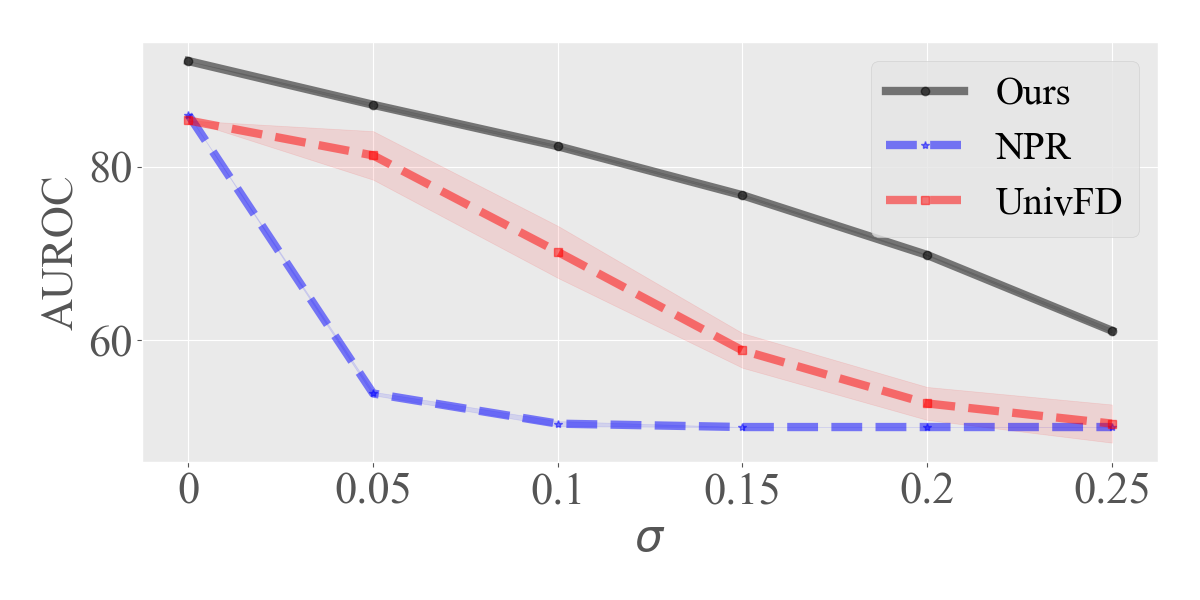}
    \vspace{-0.6cm}
    \caption{} 
    \label{fig:noise}
  \end{subfigure}

  \vspace{-0.2cm} 
  \caption{Robustness to perturbations: (a) JPEG compression; (b) Gaussian blur; (c) Gaussian noise.}
  \label{Perturbations}
  \vspace{-0.5cm} 
\end{figure*}


\vspace{-0.1cm}
\noindent \textbf{Experimental results on Sora.} We further evaluate the performance of our method on video generation models. Specifically, we collect multiple publicly available Sora~\citep{sora} videos and generate multiple videos using Open Sora~\citep{opensora}, and sample images from these videos as AI-generated images and sample natural images from Laion-400M~\citep{schuhmann2021laion} to evaluate the effectiveness of our method. As shown in Table~\ref{compar_sora}, our method achieves the best performance on the images generated by these unknown generative models. These results highlight the effectiveness of the proposed method.

\vspace{-0.1cm}
\subsection{Ablation study}
\vspace{-0.1cm}

\noindent \textbf{The effect of pretrained models.} In our experiments, we use DINOv2 ViT-L/14 as the vision model. To further investigate the effect of vision model, we also evaluate the performance of the unlearning approach on other models. As shown in Table~\ref{eff_model}, the proposed method is able to distinguish between natural and generated images across different models.

\begin{wraptable}{r}{0.45\columnwidth}
\centering
\vspace{-0.4cm}
\caption{The effect of pretrained models.}
\label{eff_model}
\vspace{-0.2cm}
\resizebox{0.4\columnwidth}{!}{%
\begin{tabular}{l|c|c}
\toprule
Model & AUROC & AP \\
\midrule 
DINOv2: ViT-S/14 & 74.01 & 72.86 \\
DINOv2: ViT-B/14 & 85.74 & 83.05 \\
DINOv2: ViT-L/14 & 92.20 & 91.45 \\
DINOv2: ViT-g/14 & 88.12 & 84.73 \\
CLIP: ViT-L/14   & 85.92 & 85.65 \\
CLIP: RN50$\times$64 & 80.03 & 78.32 \\
\bottomrule
\end{tabular}%
}
\vspace{-0.4cm}
\end{wraptable}

\noindent \textbf{Select which block for unlearning?} As shown in Figure~\ref{eff_block}, we investigate the impact of selecting different blocks for unlearning. The results indicate that our method demonstrates robustness across various blocks. However, when unlearning is applied at the top block, performance experiences a significant degradation. This is likely due to the fact that the top block parameters encode higher-level features, and pruning these parameters directly disrupts the feature representation of natural images. As a result, the feature consistency between the learned and unlearned models for natural images is compromised, leading to a notable decline in the performance.

\vspace{-0.1cm}
\noindent \textbf{The effect of pruning ratio.} Figure~\ref{eff_ratio} illustrates the effect of the pruning ratio on the performance of our method. The results show that our method remains robust across a wide range of pruning ratios. Performance degradation occurs only when the pruning ratio is too low, where only limited information is removed and the features extracted by the original and pruned models remain identical.

\begin{wrapfigure}{r}{0.44\columnwidth}
  \centering
  \vspace{-0.3cm}
  \includegraphics[width=0.44\columnwidth]{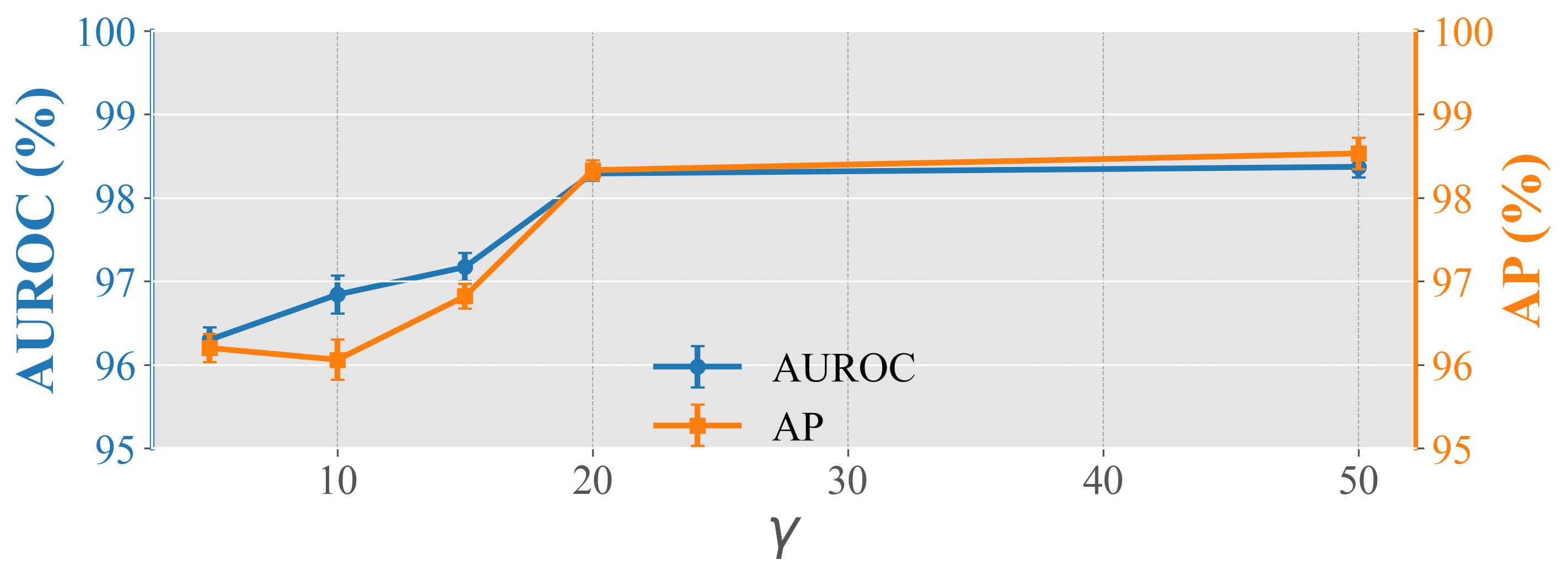}
  \vspace{-0.4cm}
  \caption{The effect of parameter $\gamma$.}
  \label{eff_gamma}
  \vspace{-0.4cm}
\end{wrapfigure}

\noindent \textbf{Robustness to image corruptions.} In addition to the performance on clean images, the robustness of the detector to various image corruptions is also an important metric. In reality, images may be exposed to various perturbations, e.g., when people upload images to social media, images may be compressed, and these operations may affect the performance of the detector. Following previous work~\citep{DBLP:conf/cvpr/WangW0OE20, DBLP:conf/cvpr/OjhaLL23}, we assess the robustness of detectors against three types of perturbations:
JPEG compression (with quality parameter $q$)), Gaussian blur (with standard deviation $\sigma$), and Gaussian
noise (with standard deviation $\sigma$). As shown in Figure~\ref{Perturbations}, our method also achieves the best detection performance under different image perturbations.

\noindent \textbf{The effect of $\gamma$.} Figure~\ref{eff_gamma} illustrates how margin $\gamma$ influences the performance of our data-driven unlearning approach. The results show that our method is robust to $\gamma$.

  



\vspace{-0.1cm}
\noindent \textbf{The effect of parameters.} In DINOv2-L/14, a transformer block consists of two main components: the attention module and the Multi-Layer Perceptron (MLP). Among them, the attention module mainly consists of three parameters: Query, Key and Value. MLP mainly consists of two fully connected layers. In our main experiment, we realize unlearning by pruning the parameters of the second fully connected layer (fc2) in the MLP. In Table~\ref{eff_param}, we further explore the effect of pruning other parameters. The results show that pruning different parameters can obtain good performance.

\noindent \textbf{Comparison with linear classification.} We further compare our method to training a single linear layer for binary classification on top of DINOv2-L/14. We use the training set in UnivFD and use JPEG and Blur as data augmentation methods. As shown in Table~\ref{com_linear}, on the same backbone, our unlearning method also outperforms the learning method.

\noindent \textbf{Ablation experiment on LoRA parameters.} As shown in Table~\ref{eff_lora}, we  conduct ablation experiments on LoRA parameter. The results show that the performance of our unlearning method remains stable under different LoRA parameters, demonstrating the robustness of our method.

\vspace{-0.1cm}
\section{Related Work}
\vspace{-0.2cm}
\label{sec:formatting}

\textbf{AI-generated Image Detection.} In recent years, the detection of AI-generated images has emerged as a critical research area, driven by the rapid advancement of generative models~\citep{DBLP:conf/iclr/BrockDS19, DBLP:conf/nips/HoJA20}. These models can create hyper-realistic images, which raises concerns around issues such as misinformation, privacy violations, and authenticity. To address these concerns, various methods have been proposed to differentiate between natural and AI-generated images. Early methods in this field~\citep{DBLP:conf/iclr/BrockDS19} largely focus on training specialized binary classifiers to distinguish between natural and generated images. For example, CNNspot~\citep{DBLP:conf/cvpr/WangW0OE20} trains a binary classifier using natural and generated images, where JPEG and Blur are used as data augmentation to improve the robustness of the classifier. UniversalFakeDetect~\citep{DBLP:conf/cvpr/OjhaLL23} proposes using CLIP’s representation space~\citep{DBLP:conf/icml/RadfordKHRGASAM21} to train classifiers, which shows superior performance across a wider range of generative architectures. Gendet~\citep{zhu2023gendet} proposes an adversarial teacher-student discrepancy-aware framework, while LaRE$^2$\citep{DBLP:journals/corr/abs-2403-17465} introduces a latent reconstruction error-guided feature refinement approach for detecting images generated by diffusion models. Although these methods have made significant strides, those relying on training still face challenges related to generalization and computational costs. To overcome these limitations, recent studies have shifted their focus toward training-free detection approaches. AEROBLADE\citep{DBLP:journals/corr/abs-2312-10461} takes a training-free approach by assessing reconstruction errors through autoencoder used in Latent Diffusion Model (LDM)~\citep{DBLP:conf/cvpr/RombachBLEO22}, but it is limited to LDM-based generative models. More recently, \citep{nie2025epistemic} leverages the epistemic uncertainty of models pre-trained on natural images to detect generated images, \citep{zhang2025detecting} detects generated images by fitting natural image distributions, and \citep{cai2025towards} explores building a generalizable detector for generated images. In our paper, the proposed unlearning method is also a train-free method and can be generalized to various generative models.

\noindent \textbf{Machine Unlearning.} Machine unlearning has emerged as an important research area due to the need for privacy and security. It focuses on removing the influence of specific data points from a trained model without retraining it from scratch. However, unlearning in deep neural networks is challenging due to their highly non-convex loss functions. \citep{DBLP:conf/cvpr/GolatkarAS20} propose a method for scrubbing the weights clean of information about a particular set of training data. \citep{DBLP:conf/asiaccs/NguyenODCL22} propose a Markov chain Monte Carlo-based machine unlearning algorithm through parameter sampling. These methods balance efficiency and model performance. \citep{DBLP:conf/cvpr/GolatkarARPS21} propose to set a weight subset to zero to effectively remove all the information contained in the non-core data while minimizing the performance loss. \citep{DBLP:conf/nips/JiaLRYLLSL23} utilizes model sparsification via weight pruning to reduce
the gap between exact unlearning and approximate unlearning. In this paper, we perform machine unlearning by sparsification.

\vspace{-0.2cm}
\section{Conclusion}
\vspace{-0.2cm}


In this paper, we propose a novel machine unlearning framework for detecting AI-generated images. Through rigorous analysis of the differential forgetting dynamics in large-scale vision models during unlearning, we establish that feature degradation for generated images outpaces that for natural images. Leveraging this insight, we develop an unlearning-based detection approach that effectively distinguishes generated images from natural ones. Comprehensive evaluations across diverse benchmarks demonstrate superior performance over existing methods.

\nocite{langley00}

\bibliography{example_paper}
\bibliographystyle{apalike}


\appendix

\section{Appendix}

\subsection{Social impacts}
\label{sup:social_impact}

The proposed method for detecting AI-generated images significantly contributes to mitigating societal risks associated with generative model misuse. By enhancing the capability to identify synthetic media, such as deepfakes, this work bolsters efforts to counter disinformation and fosters trust in digital media, particularly in critical domains such as journalism and legal evidence.

\subsection{Limitations}
\label{sup:limitations}
    The proposed method relies on a vision foundation model whose pre-training data are dominated by natural images. As synthetic images become increasingly prevalent in future pre-training corpora, the directional sensitivity gap exploited by our data-free unlearning method may weaken, potentially reducing its effectiveness.


\subsection{Detailed Proofs}
\label{app:proofs}

\subsubsection{Proof of Proposition~\ref{prop:output_difference}}
\label{app:proof_output_difference}

\begin{proof}
For each input \(x\), by the local smoothness of \(f(x;\theta)\) with respect to \(\theta\), we have the first-order expansion
\begin{equation}
\label{eq:feature_taylor}
f(x;\theta')
-
f(x;\theta)
=
-
J_{\theta}f(x;\theta)\Delta\theta
+
e_x ,
\end{equation}
where \(J_{\theta}f(x;\theta)\) is the Jacobian of the representation with respect to model parameters.
Since the second-order derivative of \(f\) with respect to \(\theta\) is locally bounded, there exists a constant \(C_e>0\) such that
\begin{equation}
\label{eq:local_remainder_bound}
\|e_x\|_2
\leq
C_e
\|\Delta\theta\|_2^2 .
\end{equation}
Taking the squared norm on both sides of Eq.~\ref{eq:feature_taylor}, we obtain
\begin{equation}
\label{eq:squared_feature_shift}
\begin{split}
\left\|
f(x;\theta')
-
f(x;\theta)
\right\|_2^2
=
&
\left\|
J_{\theta}f(x;\theta)\Delta\theta
\right\|_2^2
-
2
\left\langle
J_{\theta}f(x;\theta)\Delta\theta,
e_x
\right\rangle
+
\|e_x\|_2^2 .
\end{split}
\end{equation}
Under the local boundedness of \(J_{\theta}f(x;\theta)\), the last two terms can be bounded by \(C_f\|\Delta\theta\|_2^3\) for some constant \(C_f>0\).
Therefore, taking expectation over \(x\sim\mathcal{D}\), we have
\begin{equation}
\label{eq:output_sensitivity_lower}
\Delta_{\mathrm{out}}^{\mathcal{D}}
=
\mathbb{E}_{x\sim\mathcal{D}}
\left[
\left\|
f(x;\theta')
-
f(x;\theta)
\right\|_2^2
\right]
\geq
\mathcal{S}_{\mathcal{D}}(\Delta\theta)
-
C_f
\|\Delta\theta\|_2^3 .
\end{equation}
Similarly, we also have the corresponding upper bound
\begin{equation}
\label{eq:output_sensitivity_upper}
\Delta_{\mathrm{out}}^{\mathcal{D}}
\leq
\mathcal{S}_{\mathcal{D}}(\Delta\theta)
+
C_f
\|\Delta\theta\|_2^3 .
\end{equation}
Applying Eq.~\ref{eq:output_sensitivity_lower} to \( \mathcal{D}_{\mathrm{gen}} \) and Eq.~\ref{eq:output_sensitivity_upper} to \( \mathcal{D}_{\mathrm{nat}} \), and absorbing constants into \(C_f\), we obtain
\begin{equation}
\label{eq:diff_output_sensitivity}
\begin{split}
\Delta_{\mathrm{out}}^{\mathcal{D}_{\mathrm{gen}}}
-
\Delta_{\mathrm{out}}^{\mathcal{D}_{\mathrm{nat}}}
\geq
&
\mathcal{S}_{\mathcal{D}_{\mathrm{gen}}}(\Delta\theta)
-
\mathcal{S}_{\mathcal{D}_{\mathrm{nat}}}(\Delta\theta)
-
C_f
\|\Delta\theta\|_2^3 .
\end{split}
\end{equation}
By Assumption~\ref{assump:representation_gap}, we have
\begin{equation}
\mathcal{S}_{\mathcal{D}_{\mathrm{gen}}}(\Delta\theta)
-
\mathcal{S}_{\mathcal{D}_{\mathrm{nat}}}(\Delta\theta)
\geq
\omega
\|\Delta\theta\|_2^2 .
\end{equation}
Substituting this inequality into Eq.~\ref{eq:diff_output_sensitivity} yields
\begin{equation}
\Delta_{\mathrm{out}}^{\mathcal{D}_{\mathrm{gen}}}
-
\Delta_{\mathrm{out}}^{\mathcal{D}_{\mathrm{nat}}}
\geq
\omega
\|\Delta\theta\|_2^2
-
C_f
\|\Delta\theta\|_2^3 .
\end{equation}
If \( \omega>C_f\|\Delta\theta\|_2 \), the right-hand side is positive. Therefore,
\begin{equation}
\Delta_{\mathrm{out}}^{\mathcal{D}_{\mathrm{gen}}}
>
\Delta_{\mathrm{out}}^{\mathcal{D}_{\mathrm{nat}}}.
\end{equation}
This completes the proof.
\end{proof}

\paragraph{Discussion on the assumption.}
Assumption~\ref{assump:representation_gap} characterizes the directional sensitivity pattern underlying the proposed detection mechanism.
In the considered setting, the pruning direction is induced by small-magnitude weights of a natural-image-dominated pre-trained vision model, and generated images behave as distribution-shifted samples with respect to the learned representation.
This condition is motivated by prior observations that model compression can disproportionately affect long-tail or less frequent samples.
Intuitively, weights that are less critical for preserving natural-image representations may still contribute to fragile representation components of generated images, making generated images more sensitive to pruning.
This sensitivity pattern is empirically supported by the observed forgetting dynamics in Figure~\ref{discrepancy} and the pruning-induced feature shift in Figure~\ref{fig:feature_shift}, where generated images exhibit more pronounced representation changes than natural images.

\subsubsection{Supplementary Risk-Level Analysis}
\label{app:risk_level_analysis}

We further provide a risk-level analogue of the representation-level analysis.
This result is conditional on a directional curvature gap and should be interpreted as a local perturbation analysis of the pruning direction.

Let
\begin{equation}
\mathcal{R}_{\mathcal{D}}(\theta)
=
\mathbb{E}_{(x,y)\sim\mathcal{D}}
\left[
\ell(f(x;\theta),y)
\right]
\end{equation}
be the population risk over distribution \( \mathcal{D} \), and let
\begin{equation}
H_{\mathcal{D}}
=
\nabla_{\theta}^{2}
\mathcal{R}_{\mathcal{D}}(\theta)
\end{equation}
be its Hessian at \( \theta \).
Assume that the risks \( \mathcal{R}_{\mathcal{D}_{\mathrm{nat}}}(\theta) \) and \( \mathcal{R}_{\mathcal{D}_{\mathrm{gen}}}(\theta) \) are twice differentiable in a local neighborhood of \( \theta \), and their Hessians are locally Lipschitz-continuous.
Suppose that there exists \( \rho>0 \) such that
\begin{equation}
\label{eq:risk_curvature_gap_appendix}
\Delta\theta^{\top}
\left(
H_{\mathcal{D}_{\mathrm{gen}}}
-
H_{\mathcal{D}_{\mathrm{nat}}}
\right)
\Delta\theta
\geq
\rho
\|\Delta\theta\|_2^2 .
\end{equation}
Since \( \theta'=\theta-\Delta\theta \), by a second-order Taylor expansion around \( \theta \), there exists a constant \( C_R>0 \) such that
\begin{equation}
\label{eq:risk_taylor_appendix}
\mathcal{R}_{\mathcal{D}}(\theta')
-
\mathcal{R}_{\mathcal{D}}(\theta)
=
-
\nabla_{\theta}\mathcal{R}_{\mathcal{D}}(\theta)^{\top}\Delta\theta
+
\frac{1}{2}
\Delta\theta^{\top}
H_{\mathcal{D}}
\Delta\theta
+
r_{\mathcal{D}},
\end{equation}
where
\begin{equation}
|r_{\mathcal{D}}|
\leq
C_R
\|\Delta\theta\|_2^3 .
\end{equation}
Applying Eq.~\ref{eq:risk_taylor_appendix} to \( \mathcal{D}_{\mathrm{gen}} \) and \( \mathcal{D}_{\mathrm{nat}} \), and then subtracting the two equations, gives
\begin{equation}
\label{eq:risk_difference_appendix}
\begin{split}
&
\Delta\mathrm{Err}_{\mathcal{D}_{\mathrm{gen}}}
-
\Delta\mathrm{Err}_{\mathcal{D}_{\mathrm{nat}}}
\\
=
&
-
\left(
\nabla_{\theta}\mathcal{R}_{\mathcal{D}_{\mathrm{gen}}}(\theta)
-
\nabla_{\theta}\mathcal{R}_{\mathcal{D}_{\mathrm{nat}}}(\theta)
\right)^{\top}
\Delta\theta
\\
&
+
\frac{1}{2}
\Delta\theta^{\top}
\left(
H_{\mathcal{D}_{\mathrm{gen}}}
-
H_{\mathcal{D}_{\mathrm{nat}}}
\right)
\Delta\theta
+
r_{\mathcal{D}_{\mathrm{gen}}}
-
r_{\mathcal{D}_{\mathrm{nat}}}.
\end{split}
\end{equation}
Using Eq.~\ref{eq:risk_curvature_gap_appendix} and absorbing the Taylor remainders into \(C_R\), we obtain
\begin{equation}
\label{eq:risk_bound_appendix}
\begin{split}
\Delta\mathrm{Err}_{\mathcal{D}_{\mathrm{gen}}}
-
\Delta\mathrm{Err}_{\mathcal{D}_{\mathrm{nat}}}
\geq
&
\frac{\rho}{2}
\|\Delta\theta\|_2^2
-
\left|
\left(
\nabla_{\theta}\mathcal{R}_{\mathcal{D}_{\mathrm{gen}}}(\theta)
-
\nabla_{\theta}\mathcal{R}_{\mathcal{D}_{\mathrm{nat}}}(\theta)
\right)^{\top}
\Delta\theta
\right|
\\
&
-
C_R
\|\Delta\theta\|_2^3 .
\end{split}
\end{equation}
Therefore, when the directional curvature gap dominates the first-order residual and the higher-order remainder, the pruning-induced risk increment is larger for generated images than for natural images.

\subsection{Alleviate sensitivity to models through fine-tuning}

As shown in Table~\ref{eff_model}, data-free Unlearning exhibits sensitivity to the choice of feature extractor, as it depends on subtle distinctions in how natural and generated images are represented. For the CLIP model, which is trained with text supervision, extracted image features tend to prioritize semantic information, potentially reducing their suitability for this task. Conversely, the DINOv2:ViT-g/14 model, with its large parameter count, shows limited sensitivity to pruning parameters from a single layer, resulting in minimal impact on the final feature representations. This sensitivity to the backbone can be alleviated through fine-tuning the feature extractor, as demonstrated in the results presented in Table~\ref{driven_unlearning}.

\begin{table}[t] 
\centering
\caption{The effect of LoRA parameters. We report AUROC/AP.}
\label{eff_lora}
\resizebox{0.6\textwidth}{!}{%
\begin{tabular}{c|ccc} 
\toprule
$lora_{\alpha}$ & $r = 4$ & $r = 8$ & $r = 16$ \\
\midrule
8  & 98.72/98.74 & 98.29/98.33 & 98.01/98.20 \\
16 & 98.48/98.53 & 98.71/98.79 & 98.79/98.83 \\
32 & 98.68/98.74 & 98.10/98.21 & 98.61/98.65 \\
\bottomrule
\end{tabular}
}
\vspace{-0.2cm}
\end{table}

\begin{table}[h]
\centering
\caption{Alleviate sensitivity to models through fine-tunings.}
\label{driven_unlearning}
\resizebox{0.7\textwidth}{!}{\begin{tabular}{p{120pt}p{90pt}|p{35pt}|p{20pt}}
\toprule
Method & Model & AUROC & AP \\
\midrule
Data-free Unlearning & DINOv2:ViT-L/14 & 92.20 & 91.45 \\
Data-free Unlearning & DINOv2:ViT-g/14 & 88.12 & 84.73 \\
Data-free Unlearning & CLIP:ViT-L/14   & 85.92 & 85.65 \\
\midrule
Data-driven Unlearning & DINOv2:ViT-L/14 & 98.29 & 98.33 \\
Data-driven Unlearning & DINOv2:ViT-g/14 & 97.96 & 98.15 \\
Data-driven Unlearning & CLIP:ViT-L/14   & 97.63 & 97.48 \\
\bottomrule
\end{tabular}}
\end{table}

\subsection{Performance of data-free unlearning on weaker models}

We conduct additional experiments using weaker pretrained vision models, including MoCo~\citep{DBLP:conf/cvpr/He0WXG20}, SwAV~\citep{DBLP:conf/nips/CaronMMGBJ20}, and DINO~\citep{DBLP:conf/iccv/CaronTMJMBJ21}. As presented in Table~\ref{weak_model}, our Data-free Unlearning exhibit significantly reduced performance on these models. This highlights a promising direction to explore detection with small models. 



\begin{table}[h]
\centering
\begin{minipage}{0.3\textwidth}
\centering
\caption{Performance on weak models.}
\label{weak_model}
\resizebox{\textwidth}{!}{
\begin{tabular}{lcc}
\toprule
Models & AUROC & AP \\
\midrule
MoCo & 72.69 & 70.15 \\
SwAV & 77.85 & 75.64 \\
DINO & 74.79 & 71.88 \\
\bottomrule
\end{tabular}}
\end{minipage}
\hfill
\begin{minipage}{0.53\textwidth}
\centering
\caption{Comparison of training strategies}
\label{com_train_stra}
\resizebox{\textwidth}{!}{
\begin{tabular}{lccc}
\toprule
Methods & AUROC & AP & Training time(min) \\
\midrule
LoRA & 98.29 & 98.33 & 92 \\
Full fine-tuning & 98.37 & 97.72 & 152 \\
Linear probing & 87.83 & 86.49 & 34 \\
\bottomrule
\end{tabular}}
\end{minipage}
\end{table}

\subsection{Comparison of computational efficiency}

As shown in Table~\ref{com_compute_cost}, we conduct a comparative analysis of the training and inference costs of various methods on the ImageNet dataset. For inference cost assessment, we measured the time required to detect 100 images. Experimental results show that our unlearning method also shows advantages in computational cost.

\begin{table}[h]
\centering
\caption{Comparison of computational efficiency.}
\label{com_compute_cost}
\resizebox{0.7\textwidth}{!}{\begin{tabular}{lcc}
\toprule
Methods & training cost (hours) &inference cost (seconds) \\
\midrule
UnivFD &0.8	&1.1\\
NPR	&0.7	&0.7\\
DRCT	&25.4	&1.1\\
AEROBLADE	&0.0	&17.6\\
Data-free Unlearning	&0.0	&2.5\\
Data-driven Unlearning	&1.5	&2.5\\
\bottomrule
\end{tabular}}
\end{table}

\subsection{Comparison of training strategies}

To further validate the effectiveness and efficiency of our unlearning-based approach, we conduct additional comparative experiments using three adaptation strategies on DINOv2: (1) our proposed unlearning via LoRA, (2) full fine-tuning with the same unlearning objective, and (3) linear probing, where a lightweight binary classifier is trained on frozen DINOv2 features using standard cross-entropy loss.
These strategies are evaluated on standard metrics including AUROC and AP, along with training time as a proxy for computational efficiency. The results, summarized in Table~\ref{com_train_stra}, reveal clear trade-offs: while full fine-tuning achieves marginally higher performance in some settings, it incurs substantially higher computational cost. Linear probing offers the fastest training but sacrifices detection accuracy due to its limited expressiveness. In contrast, our LoRA-based unlearning strikes a favorable balance, delivering competitive detection performance with significantly reduced training overhead—making it a practical and efficient choice for our method.

\subsection{Unlearning with different pruning strategy}

In our experiments, we focus on unlearning by pruning the weights with the smallest magnitude. We further explore ablation experiments by randomly pruning the weights, pruning the weights with the largest magnitude, and filling in the smallest magnitude weights with Gaussian noise instead of using 0 after pruning. As shown in Table~\ref{pruning_strategy}, simply pruning the weights with the smallest magnitude for unlearning achieves the best performance.

\subsection{Experimental results on Sora, DRCT-2M, LSUNBEDROOM and DiffusionForensics}

\begin{table*}[!]
\centering
\setlength{\tabcolsep}{3pt} 
\caption{AI-generated image detection performance, measured by AUROC ($\%$) and AP ($\%$), on Sora. }
\label{compar_sora}
\resizebox{1 \textwidth}{!}{%
\begin{tabular}{@{}lcc>{\columncolor{pink!20}}c>{\columncolor{pink!20}}c|cccccccccccccccc>{\columncolor{pink!20}}c>{\columncolor{pink!20}}c@{}}
\toprule
 &
 \multicolumn{2}{c}{AEROBLADE}&
  \multicolumn{2}{c}{\makecell{Data-free \\ Unlearning}}  &
  \multicolumn{2}{c}{CNNspot} &
  \multicolumn{2}{c}{UnivFD} &
  \multicolumn{2}{c}{DIRE} &
  \multicolumn{2}{c}{NPR}&
   \multicolumn{2}{c}{{PatchCraft}}&
   \multicolumn{2}{c}{{FatFormer}} &
  \multicolumn{2}{c}{DRCT}&
   \multicolumn{2}{c}{{AIDE}}&
  \multicolumn{2}{c}{\makecell{Data-driven \\Unlearning}}
  \\ \cmidrule(l){2-3} \cmidrule(l){4-5}\cmidrule(l){6-7}  \cmidrule(l){8-9}\cmidrule(l){10-11}\cmidrule(l){12-13} \cmidrule(l){14-15} \cmidrule(l){16-17} \cmidrule(l){18-19} \cmidrule(l){20-21} \cmidrule(l){22-23}
\multirow{-3}{*}{Models}  &
  AUROC &
  AP &
  AUROC&
  AP &
  AUROC&
  AP &
  AUROC&
  AP &
  AUROC&
  AP &
  AUROC&
  AP &
  AUROC&
  AP &
  AUROC&
  AP &
  AUROC&
  AP &
  AUROC&
  AP &
  AUROC&
  AP 
  \\ \midrule
Sora  &58.00&57.13 &\textbf{90.89} &\textbf{90.18} &52.85 &53.29  &77.06 &80.69 &52.83 &52.16 &51.92 &50.25 &84.39&82.16&89.95&87.64&82.53 &82.28 &91.76&89.39&\textbf{95.89} &\textbf{95.54}\\
Open Sora  &62.37&55.79& \textbf{90.00} &\textbf{89.10} &50.14 &51.38  &67.05 &68.67 &53.66 &52.98 &50.25 &51.84&83.58&81.89&88.76&87.99&81.79 &80.11 &89.47&88.98&\textbf{97.03} &\textbf{95.70} \\
Average  &60.19 &56.46&\textbf{90.45} &\textbf{89.64} &51.50 &52.84  &72.06 &74.68&53.25 &52.57 &51.09 &51.05 &83.99&82.03&89.36&87.82&82.16 &81.20 &90.62 &89.19&\textbf{96.46}&\textbf{95.62}\\
 \bottomrule
\end{tabular}
}
\vspace{-0.3cm}
\end{table*}

\begin{table*}
\centering
\caption{Threshold sensitivity.}
\label{Threshold_sensitivity}
\resizebox{0.7\textwidth}{!}{\begin{tabular}{lcccccccc}
\toprule
Datasets &GLIDE	&Midjourney	&BigGAN	&SD V1.4	&VQDM	&SD V1.5	&Wukong	&ADM \\
\midrule
Threshold &0.93848	&0.94092	&0.94287	&0.93896	&0.91210	&0.94385	&0.93530	&0.94678\\
Acc(\%) &81.69	&81.98	&81.8	&81.94	&79.36	&81.64	&81.79	&81.3\\
\bottomrule
\end{tabular}}
\end{table*}

Table~\ref{compar_sora}, ~\ref{com_drct}, \ref{compar_lsun} and \ref{compar_DiffusionForensics} shows the performance of our unlearning approach on DRCT-2M, LSUNBEDROOM and DiffusionForensics, respectively. The results further demonstrate the effectiveness of our unlearning approach.

\begin{table}[h]
    \centering
    \caption{AI-generated image detection performance (ACC, \%) on DRCT-2M.}
    \label{com_drct}
    \resizebox{1\textwidth}{!}{
    \begin{tabular}{lccccccccccccccccccc}
        \toprule
        \multirow{2}{*}{Method} & \multicolumn{6}{c}{SD Variants} & \multicolumn{2}{c}{Turbo Variants} &\multicolumn{2}{c}{LCM Variants} &\multicolumn{3}{c}{ControlNet Variants} & \multicolumn{3}{c}{DR Variants} &\multirow{2}{*}{Avg.} \\
        \cmidrule(lr){2-7} \cmidrule(lr){8-9}
        \cmidrule(lr){10-11}
        \cmidrule(lr){12-14}
        \cmidrule(lr){15-17}
        
         & LDM & SDv1.4 & SDv1.5 & SDv2 & SDXL & \makecell{SDXL-\\Refiner} & \makecell{SD-\\Turbo} & \makecell{SDXL-\\Turbo} & \makecell{LCM-\\SDv1.5} & \makecell{LCM-\\SDXL} & \makecell{SDv1-\\Ctrl} & \makecell{SDv2-\\Ctrl} & \makecell{SDXL-\\Ctrl} & \makecell{SDv1-\\DR} & \makecell{SDv2-\\DR} & \makecell{SDXL-\\DR} & \\
        \midrule
        CNNSpot  & 99.87 & 99.91 & 99.90 & 97.63 & 66.25 & 86.55 & 86.15 & 72.42 & 98.26 & 61.72 & 97.96 & 85.89 & 82.94 & 60.93 & 51.41 & 50.28 & 81.12 \\
        F3Net  & 99.85 & 99.78 & 99.79 & 88.60 & 55.85 & 87.37 & 63.29 & 63.66 & 97.39 & 54.98 & 97.98 & 72.39 & 81.99 & 65.42 & 50.39 & 50.27 & 71.13 \\
        CLIP/RN50  & 99.00 & 99.99 & 99.96 & 94.61 & 62.08 & 91.43 & 84.40 & 64.40 & 98.97 & 57.43 & 99.74 & 80.69 & 82.03 & 65.83 & 50.67 & 50.47 & 80.05 \\
        GramNet  & 99.40 & 99.01 & 98.84 & 95.30 & 62.63 & 80.68 & 71.19 & 69.32 & 93.05 & 57.02 & 89.97 & 75.55 & 82.68 & 51.23 & 50.01 & 50.08 & 76.62 \\
        De-fake  & 92.1 & 95.53 & 99.51 & 89.65 & 64.02 & 69.24 & 92.00 & 93.93 & 99.13 & 70.89 & 58.98 & 62.34 & 66.66 & 50.12 & 50.16 & 50.00 & 75.52 \\
        Conv-B  & \textbf{99.97} & \textbf{100.0} & \textbf{99.97} & 95.84 & 64.44 & 82.00 & 60.75 & 99.27 & 99.27 & 62.33 & \textbf{99.80} & 83.40 & 73.28 & 61.65 & 51.79 & 50.41 & 79.11 \\
        UniFD  & 98.30 & 96.22 & 96.33 & 93.83 & 91.01 & 93.91 & 86.38 & 85.92 & 90.44 & 89.99 & 90.41 & 81.06 & 89.06 & 51.96 & 51.03 & 50.46 & 83.46 \\
        FatFormer & 96.52 & 95.31 & 93.27 & 91.99 & 92.87 & 91.78 & 88.15 & 87.48 & 92.82 & 91.76 & 90.28 & 86.99 & 88.19 & 65.92 & 60.15 & 55.13 & 85.53 \\
        DIRE  & 54.62 & 75.89 & 76.04 & \textbf{99.87} & 59.90 & 93.08 & 97.55 & 87.29 & 72.53 & 67.85 & 99.69 & 64.40 & 64.40 & 49.96 & 52.48 & 49.92 & 72.55 \\
        DRCT  & 94.45 & 94.35 & 94.24 & 95.05 & 96.41 & 95.38 & 94.81 & 94.48 & 91.66 & 95.54 & 93.86 & 93.50 & 93.54 & \textbf{84.34} & \textbf{83.20} & 67.61 & 91.35 \\
        \rowcolor{pink!20}
        Data-free Unlearning & 93.87 & 72.41 & 71.82 & 77.64 & 83.23 & 75.39 & 71.58 & 67.59 & 66.84 & 80.67 & 84.12 & 83.89 & 88.93 & 70.67 & 69.14 & 68.59 & 76.69 \\
        \rowcolor{pink!20}
        Data-driven Unlearning & 98.73 & 98.93 & 99.23 & 99.55 & \textbf{98.90} & \textbf{99.44} & \textbf{99.32} & \textbf{99.30} & \textbf{99.33} & \textbf{99.02} & 99.14 & \textbf{99.29} & \textbf{98.87} & 76.83 & 74.63 & \textbf{73.65} & \textbf{94.50} \\
        \bottomrule
    \end{tabular}
    }
\end{table}

\begin{table*}[h]
\setlength{\tabcolsep}{3pt} 
\caption{AI-generated image detection performance on LSUN-BEDROOM.}
\label{compar_lsun}
\resizebox{\textwidth}{!}{%
\begin{tabular}{@{}lccccccccccccccccccc@{}}
\toprule
                     & \multicolumn{16}{c}{Models}                               & \multicolumn{2}{c}{} \\
 &
  \multicolumn{2}{c}{ADM} &
  \multicolumn{2}{c}{DDPM} &
  \multicolumn{2}{c}{iDDPM} &
  \multicolumn{2}{c}{Diffusion GAN} &
  \multicolumn{2}{c}{Projected GAN} &
  \multicolumn{2}{c}{StyleGAN} &
  \multicolumn{2}{c}{Unleashing Transformer} &
  \multicolumn{2}{c}{\multirow{-2}{*}{Average}} \\ \cmidrule(l){2-3} \cmidrule(l){4-5}\cmidrule(l){6-7}  \cmidrule(l){8-9}\cmidrule(l){10-11}\cmidrule(l){12-13}\cmidrule(l){14-15}
\multirow{-3}{*}{Methods}  &
  AUROC &
  AP &
  AUROC&
  AP &
  AUROC&
  AP &
  AUROC&
  AP &
  AUROC&
  AP &
  AUROC&
  AP &
  AUROC&
  AP &
  AUROC&
  AP &
  \\ \midrule
                     &&&&&&&\multicolumn{3}{c}{Training-free Methods}\\
 AEROBLADE &57.05 &58.37 &61.57 &61.49 &59.82 &61.06 &47.12 &48.25 &45.98 &46.15 &45.63 &47.06 &59.71 &57.34 &53.85 &54.25 \\
 \rowcolor{pink!20}
 Data-free Unlearning &\textbf{76.49}&\textbf{73.43}&\textbf{92.80}&\textbf{91.78} &\textbf{88.74} &\textbf{87.21} &\textbf{97.51} &\textbf{97.34} &\textbf{98.40} &\textbf{98.43} &\textbf{90.92} &\textbf{89.74} &\textbf{96.73} &\textbf{96.12} &\textbf{91.66}&\textbf{90.58}\\
 \midrule
 &&&&&&&\multicolumn{3}{c}{Training Methods}\\
CNNspot &64.83 &64.24 &79.04 &80.58 &76.95 &76.28 &88.45 &87.19 &90.80 &89.94 &95.17 &94.94 &93.42 &93.11 &84.09 &83.75\\
UnivFD &71.26 &70.95 &79.26 &78.27 &74.80 &73.46 &84.56 &82.91 &82.00 &78.42 &81.22 &78.08 &83.58 &83.48 &79.53 &77.94\\
DIRE  &57.19 &56.85 &61.91 &61.35 &59.82 &58.29 &53.18 &53.48 &55.35 &54.93 &57.66 &56.90 &67.92 &68.33  &59.00 &58.59\\
NPR &75.43 &72.60 &91.42 &90.89 &89.49 &88.25 &76.17 &74.19 &75.07 &74.59 &68.82 &63.53 &84.39 &83.67 &80.11 &78.25  \\
DRCT &74.59 &71.37 &85.45 &84.98 &87.17 &86.99 &94.19 &94.16 &95.96 &95.67 &93.92 &94.66 &89.51 &89.07 &88.68 &88.13 \\
 \rowcolor{pink!20}
 Data-driven Unlearning &\textbf{89.87} &\textbf{90.44} &\textbf{99.51} &\textbf{99.58} &\textbf{99.13} &\textbf{99.13} &\textbf{99.99} &\textbf{99.99} &\textbf{99.99} &\textbf{99.99} &\textbf{99.85} &\textbf{99.86} &\textbf{99.99} &\textbf{99.99} &\textbf{98.33} &\textbf{98.43} \\
 
 \bottomrule
\end{tabular}
}
\end{table*}

\begin{table*}[h]
\setlength{\tabcolsep}{3pt} 
\caption{AI-generated image detection performance (ACC, \%) on DiffusionForensics.}
\label{compar_DiffusionForensics}
\resizebox{\textwidth}{!}{%
\begin{tabular}{@{}lccccccccccccccccccccc@{}}
\toprule
                     & \multicolumn{16}{c}{Models}                               & \multicolumn{2}{c}{} \\
 &
  \multicolumn{2}{c}{ADM} &
  \multicolumn{2}{c}{DDPM} &
  \multicolumn{2}{c}{iDDPM} &
  \multicolumn{2}{c}{LDM} &
  \multicolumn{2}{c}{PNDM} &
  \multicolumn{2}{c}{VQ-Diffusion} &
  \multicolumn{2}{c}{SDV1} &
  \multicolumn{2}{c}{SDV2} &
  \multicolumn{2}{c}{\multirow{-2}{*}{Average}} \\ \cmidrule(l){2-3} \cmidrule(l){4-5}\cmidrule(l){6-7}  \cmidrule(l){8-9}\cmidrule(l){10-11}\cmidrule(l){12-13}\cmidrule(l){14-15}\cmidrule(l){16-17}
\multirow{-3}{*}{Methods}  &
  ACC &
  AP &
  ACC &
  AP &
  ACC &
  AP &
  ACC &
  AP &
  ACC &
  AP &
  ACC &
  AP &
  ACC &
  AP &
  ACC &
  AP &
   ACC &
  AP 
  \\ \midrule
CNNspot & 53.9 & 71.8 & 62.7 & 76.6 & 50.2 & 82.7 & 50.4 & 78.7 & 50.8 & 90.3 & 50.0 & 71.0 & 38.0 & 76.7 & 52.0 & 90.3 & 51.0 & 79.8 \\
UnivFD & 78.4 & 92.1 & 72.9 & 78.8 & 75.0 & 92.8 & 82.2 & 97.1 & 75.3 & 92.5 & 83.5 & 97.7 & 56.4 & 90.4 & 71.5 & 92.4 & 74.4 & 91.7 \\
Frank & 58.9 & 65.9 & 37.0 & 27.6 & 51.4 & 65.0 & 51.7 & 48.5 & 44.0 & 38.2 & 51.7 & 66.7 & 32.8 & 52.3 & 40.8 & 37.5 & 46.0 & 50.2 \\
Durall & 39.8 & 42.1 & 52.9 & 49.8 & 55.3 & 56.7 & 43.1 & 39.9 & 44.5 & 47.3 & 38.6 & 38.3 & 39.5 & 56.3 & 62.1 & 55.8 & 47.0 & 48.3 \\
SelfBland & 57.0 & 59.0 & 61.9 & 49.6 & 63.2 & 66.9 & 83.3 & 92.2 & 48.2 & 48.2 & 77.2 & 82.7 & 46.2 & 68.0 & 71.2 & 73.9 & 63.5 & 67.6 \\
GANDetection & 51.1 & 53.1 & 62.3 & 46.4 & 50.2 & 63.0 & 51.6 & 48.1 & 50.6 & 79.0 & 51.1 & 51.2 & 39.8 & 65.6 & 50.1 & 36.9 & 50.8 & 55.4 \\
Patchfor & 77.5 & \textbf{93.9} & 62.3 & 97.1 & 50.0 & 91.6 & \textbf{99.5} & \textbf{100.0} & 50.2 & 99.9 & \textbf{100.0} & \textbf{100.0} & 90.7 & \textbf{99.8} & \textbf{94.8} & \textbf{100.0} & 78.1 & 97.8 \\
\rowcolor{pink!20}
Data-free Unlearning & 79.8 & 85.0 & 87.5 & 94.3 & 88.3 & 95.3 & 80.2 & 89.5 & 94.2 & 98.7 & 92.3 & 98.0 & \textbf{93.1} & 97.9 & 92.8 & 97.9 & 88.5 & 94.6 \\
\rowcolor{pink!20}
Data-driven Unlearning & \textbf{86.9} & 93.4 & \textbf{98.5} & \textbf{99.9} & \textbf{98.4} & \textbf{99.9} & 95.4 & 99.0 & \textbf{99.1} & \textbf{100.0} & 99.2 & \textbf{100.0} & 92.2 & 97.5 & 93.5 & 98.1 & \textbf{95.4} & \textbf{98.5} \\
\bottomrule
\end{tabular}
}
\end{table*}



\subsection{The effect of structured pruning}
In our main experiment, we explore the effect of unstructured pruning, i.e., removing some of the weights in certain blocks individually. We further explore the effect of structured pruning, i.e., removing a certain block completely. As shown in Figure~\ref{struct_pruning}, removing shallow blocks usually gives stable results, whereas removing top blocks results in a more significant impact on the features of the natural image due to their closer connection to the output features. Thus, this results in poor detection performance. An exception is that when the second block is removed, the proposed method is completely unable to distinguish between natural images and AI-generated images. This may stem from the fact that the second block in DINOv2 is crucial for the extraction of the image features, and when the second block is removed, the model is unable to correctly extract the features of test images.

\begin{table}[h]
\centering
\caption{The effect of pruning strategy.}
\label{pruning_strategy}
\resizebox{0.55\textwidth}{!}{%
\begin{tabular}{c|p{45pt}|p{45pt}}
\toprule
Model & \hspace{6pt}AUROC & \hspace{6pt}AP \\
\midrule 
random pruning & \hspace{6pt}86.29 & \hspace{6pt}85.54 \\
pruning largest magnitude weights & \hspace{6pt}83.69 & \hspace{6pt}83.21 \\
pruning smallest magnitude weights & \hspace{6pt}92.20 & \hspace{6pt}91.45 \\
pruning and filling noise & \hspace{6pt}86.03 & \hspace{6pt}85.64 \\
\bottomrule
\end{tabular}%
}
\vspace{-0.2cm}
\end{table}

\begin{table*}[t]
\setlength{\tabcolsep}{3pt} 
\caption{\centering{Effectiveness of using feature similarity in the middle layer for detection.}}
\label{eff_middle_block}
\resizebox{\textwidth}{!}{%
\begin{tabular}{@{}lccccccccccccccccccccccc@{}}
\toprule
                     & \multicolumn{20}{c}{Generative Models}                               & \multicolumn{2}{c}{} \\
 &
  \multicolumn{2}{c}{ADM} &
  \multicolumn{2}{c}{ADMG} &
  \multicolumn{2}{c}{LDM} &
  \multicolumn{2}{c}{DiT} &
  \multicolumn{2}{c}{BigGAN} &
  \multicolumn{2}{c}{GigaGAN} &
  \multicolumn{2}{c}{StyleGAN XL} &
  \multicolumn{2}{c}{RQ-Transformer} &
  \multicolumn{2}{c}{Mask GIT} &
  \multicolumn{2}{c}{\multirow{-2}{*}{Average}} \\ \cmidrule(l){2-3} \cmidrule(l){4-5}\cmidrule(l){6-7}  \cmidrule(l){8-9}\cmidrule(l){10-11}\cmidrule(l){12-13}\cmidrule(l){14-15}\cmidrule(l){16-17}\cmidrule(l){18-19}
\multirow{-3}{*}{Block}  &
  AUROC &
  AP &
  AUROC&
  AP &
  AUROC&
  AP &
  AUROC&
  AP &
  AUROC&
  AP &
  AUROC&
  AP &
  AUROC&
  AP &
  AUROC&
  AP &
  AUROC&
  AP &
  AUROC&
  AP &\\ \midrule
 16 &70.32 &69.20 &65.32 &63.28 &82.62 &83.88 &85.69 &86.05 &88.20 &89.79 &76.42 &77.61 &82.59 &83.33 &81.48 &83.96 &86.64 &88.82 &79.92 &80.66 \\
 17 &63.93 &65.25 &61.22 &61.21 &72.37 &74.71 &80.19 &82.40 &84.65 &87.70 &61.02 &62.82  &66.01 &66.85 &73.34 &76.30 &80.59 &83.72 &71.48 &73.77\\
 18 &78.64 &76.76 &73.38 &70.61 &85.27 &84.43 &88.34 &88.05 &94.47 &94.37 &83.34 &83.75 &88.27 &88.82 &84.97 &84.02 &94.23 &94.51 &85.66 &85.04\\
 19 &86.90 &86.57 &81.04 &78.92 &88.03 &88.01 &89.80 &89.71  &96.51 &96.69 &90.86 &91.23 &93.55 &93.99 &89.77 &90.06 &96.54 &96.72 &90.33 &90.21\\
 20  &86.42 &86.31 &81.82 &79.37 &89.43 &89.60 &89.72 &89.36 &96.89 &96.99 &91.50 &91.57 &94.77 &94.96 &91.56 &91.74 &96.18 &96.30 &90.92 &90.69\\
21 &87.15 &87.57 &82.93 &80.84 &90.00 &90.33 &89.99 &90.36 &96.18 &96.42 &92.08 &92.55 &95.21 &95.52 &92.57 &93.28 &95.80 &96.08 &91.33 &91.44\\
  22 &91.04 &89.61 &85.53 &82.90 &88.00 &86.60 &86.37 &84.51 &96.29 &96.71 &94.51 &94.58 &96.65 &96.80 &94.90 &95.22 &95.97 &96.19 &92.14 &91.46 \\
 23  &91.97 &90.44 &86.82 &85.14 &87.62 &85.91 &85.74 &83.84 &96.37 &96.52 &94.39 &94.23 &96.47 &96.53 &95.19 &95.24 &95.27 &95.17 &92.20 &91.45 \\
 
 \bottomrule
\end{tabular}
}
\end{table*}

\begin{wrapfigure}{r}{0.5\textwidth}
  \centering
  \includegraphics[width=1\linewidth]{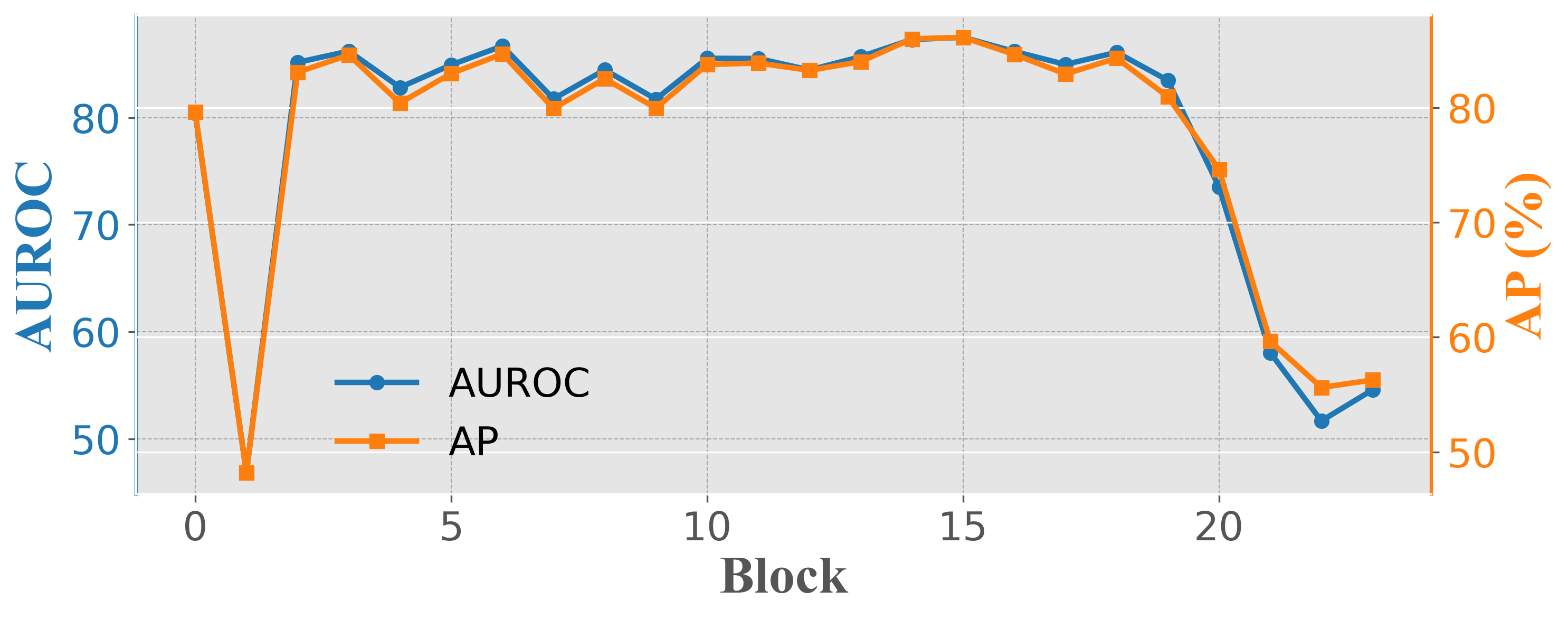}
  \caption{The effect of structured pruning. We obtain an unlearned model by completely dropping a full block.}
  \label{struct_pruning}
\end{wrapfigure}

\subsection{Using the similarity of the middle layer features as the decision score}
As shown in Table~\ref{eff_middle_block}, we further explore the effect of using the feature similarity of other middle layers as decision scores. Since we prune the weights of block 15, the outputs of the test samples on the learned and unlearned models differ from blocks 16 to block 23. Therefore, we explore the effect of feature similarity using the output of blocks from block 16 to block 23. The results show that using high-level features to compute the similarity could achieve good results. This is because there is a significant difference between the high-level features of the natural image and the generated image on the learned and unlearned models.


\subsection{Implementation Details}
\label{app:imple_detail}

For data-free unlearning, we leverage fully parameterized DINOv2 ViT-L/14 as the learned model. It has 24 transformer blocks, and we obtain a sparse model by pruning the parameters of $90\%$ of the minimum magnitude weights of the fc2 layer of its 16th transformer block, and use this model as the unlearned model. We use $1k$ natural images sampled from ImageNet and generated images generated by ProGAN to select hyperparameters. For data-driven unlearning, we leverage LoRA~\citep{DBLP:conf/iclr/HuSWALWWC22} for parameter-efficient fine-tuning. The LoRA layers are applied on the q\_proj and v\_proj layers of DINOv2. $lora\_r$ and $lora\_\alpha$ are set to 8. The margin $\gamma$ is set to 20. To optimize computational efficiency, we apply Low-Rank Adaptation (LoRA) exclusively to the 18th, 19th, and 20th blocks of the model and fine-tune for only three epochs. The model is optimized using the AdamW optimizer with a learning rate of $1 \times 10^{-5}$, $\beta_1 = 0.9$, $\beta_2 = 0.99$, and a weight decay of $0.01$. We report the average results under five different random seeds and report the standard deviation in Figure~\ref{eff_gamma}. Following CNNspot~\citep{DBLP:conf/cvpr/WangW0OE20}, data augmentation techniques including JPEG compression and Gaussian blur are employed to enhance robustness. For the IMAGENET, LSUN-BEDROOM, and DiffusionForensics benchmarks, the ProGAN dataset serves as the training set. For the GenImage benchmark, SDv1.4 dataset is used.  For the DRCT-2M benchmark, SDv2 dataset is used as training set.

When comparing classification accuracy with other methods, since our method is not a standard binary classifier, the traditional classification threshold of 0.5 is not applicable to our method. Consequently, we employed a validation set to determine an appropriate threshold. Specifically, this validation set consisted of 1,000 images generated by ProGAN and an equivalent number of natural images. We identified the threshold that maximized classification accuracy on this validation set as the optimal threshold for subsequent analyses. The determined optimal thresholds were 0.94287 for the data-free unlearning method and 0.90178 for the data-driven unlearning method, with classification accuracy calculated accordingly at these thresholds.

\subsection{Threshold sensitivity}

In our study, thresholds were selected using a validation set comprising natural images from ImageNet and generated images from ProGAN to compute classification accuracy. To rigorously evaluate threshold sensitivity, we conduct additional experiments by calibrating thresholds with various generated image sets and assessing accuracy on the GenImage dataset. The result is shown in Table~\ref{Threshold_sensitivity}, underscoring substantial variability in accuracy attributable to threshold selection, reinforcing the merit of threshold-free metrics for performance evaluation.

\subsection{Details of datasets}
\label{Details_of_Datasets}
\textbf{ImageNet and LSUN-BEDROOM.} The natural images and AI-generated images of ImageNet benchmark and LSUN-BEDROOM benchmark can be obtained from \url{https://github.com/layer6ai-labs/dgm-eval}, which are provided by \citep{DBLP:conf/nips/SteinCHSRVLCTL23}. The generated images of the ImageNet benchmark are generated with the following generative models: ADM, ADMG, BigGAN, DiT-XL-2, GigaGAN, LDM, StyleGAN-XL, RQ-Transformer, and Mask-GIT. The generated images of the LSUN-BEDROOM benchmark are generated with the following generative models: ADM, DDPM, iDDPM, StyleGAN, Diffusion-Projected GAN, Projected GAN, and Unleashing Transformers. 

\noindent \textbf{GenImage.} The natural images and AI-generated images can be obtained from \url{https://github.com/GenImage-Dataset/GenImage }. The images are provided by \citep{DBLP:conf/nips/ZhuCYHLLT0H023}. The generative model includes Midjourney, SD V1.4, SD V1.5, ADM, GLIDE, Wukong, VQDM, and BigGAN. The natural images come from ImageNet.

\noindent \textbf{Chameleon.} Chameleon is a a very challenging dataset and various detection methods perform unsatisfactorily on it, as all AI-generated images in this dataset have passed a human perception ”Turing Test”, i.e., human annotators have misclassified them as natural images. The images are provided by \citep{yan2024sanity}. The dataset can be obtained from \url{https://shilinyan99.github.io/AIDE/}. 

\noindent \textbf{DiffusionForensics.} The natural images and AI-generated images of DiffusionForensics can be obtained from \url{https://github.com/ZhendongWang6/DIRE}, which are provided by \citep{DBLP:conf/iccv/WangBZWHCL23}.  The generative model includes ADM, DDPM, iDDPM, LDM, PNDM, VQ-Diffusion, sdv1 and sdv2. 

\noindent \textbf{DRCT-2M.} The natural images of DRCT-2M come from CoCo and can be obtained from \url{https://cocodataset.org/#download}. AI-generated images of DRCT-2M can be obtained from \url{https://modelscope.cn/datasets/BokingChen/DRCT-2M/files}, which are provided by \citep{DBLP:conf/icml/ChenZYY24}.  The generative model includes  LDM, SDv1.4, SDv1.5, SDv2, SDXL, SDXL-Refiner, SD-Turbo, SDXL-Turbo, LCM-SDv1.5, LCM-SDXL, SDv1-Ctrl, SDv2-Ctrl, SDXL-Ctrl, SDv1-DR, SDv2-DR, SDXL-DR. 


\end{document}